\documentclass{article}
\usepackage{algorithm}
\usepackage[noend]{algorithmic}
\usepackage{amsmath}
\usepackage{amssymb}
\usepackage{amsthm}
\usepackage{bbm}
\usepackage{bm}
\usepackage{color}
\usepackage{dsfont}
\usepackage{enumerate}
\usepackage{enumitem}
\usepackage{graphicx}
\usepackage{hyperref}
\usepackage[accepted]{icml2017}
\usepackage{natbib}
\usepackage{subfigure}
\usepackage{tabularx}
\usepackage{times}
\usepackage[usenames,dvipsnames]{xcolor}

\usepackage[capitalize,noabbrev]{cleveref}

\newcommand{\commentout}[1]{}
\newcommand{\junk}[1]{}

\newtheorem{theorem}{Theorem}

\newtheorem{lemma}[theorem]{Lemma}

\newtheorem{assumption}{Assumption}

\newcommand{\cD}{\mathcal{D}}
\newcommand{\cE}{\mathcal{E}}
\newcommand{\ccE}{\overline{\cE}}
\newcommand{\cR}{\mathcal{R}}

\newcommand{\len}{\mathrm{len}}

\newcommand{\abs}[1]{\left|#1\right|}
\newcommand{\ceils}[1]{\left\lceil#1\right\rceil}
\newcommand{\condE}[2]{\mathbb{E} \left[#1 \,\middle|\, #2\right]}

\newcommand{\E}[1]{\mathbb{E} \left[#1\right]}

\newcommand{\I}[1]{\mathds{1} \! \left\{#1\right\}}

\newcommand{\kl}[2]{D_\mathrm{KL}\hspace{-1mm}\left(#1 \,\|\, #2\right)}

\newcommand{\rnd}[1]{\bm{#1}}
\newcommand{\set}[1]{\left\{#1\right\}}

\DeclareMathOperator*{\argmax}{arg\,max\,}
\DeclareMathOperator*{\argmin}{arg\,min\,}
\mathchardef\mhyphen="2D

\newcommand{\cascadeklucb}{{\tt CascadeKL\mhyphen UCB}}

\newcommand{\expthree}{{\tt Exp3}}

\newcommand{\klucb}{{\tt KL\mhyphen UCB}}

\newcommand{\mergerank}{{\tt BatchRank}}
\newcommand{\rankedexpthree}{{\tt RankedExp3}}

\newcommand{\rankoneelim}{{\tt Rank1Elim}}
\newcommand{\ucb}{{\tt UCB1}}

\newcommand{\attrprob}{\alpha}
\newcommand{\attrRV}{\rnd{A}}
\newcommand{\attrval}{A}
\newcommand{\examprob}{\chi}
\newcommand{\examRV}{\rnd{X}}
\newcommand{\examval}{X}
\newcommand{\avgexamprob}{\bar{\rnd{\chi}}}
\newcommand{\activeBatches}{\mathcal{A}}

\pdfinfo{
  /Author (Masrour Zoghi, Tomas Tunys, Mohammad Ghavamzadeh, Branislav Kveton, Csaba Szepesvari, and Zheng Wen)
  /Title (Online Learning to Rank in Stochastic Click Models)
}

\begin{document}

\icmltitlerunning{Online Learning to Rank in Stochastic Click Models}

\twocolumn[
\icmltitle{Online Learning to Rank in Stochastic Click Models}

\icmlsetsymbol{equal}{*}

\begin{icmlauthorlist}
\icmlauthor{Masrour Zoghi}{IR}
\icmlauthor{Tomas Tunys}{Cz}
\icmlauthor{Mohammad Ghavamzadeh}{DM}
\icmlauthor{Branislav Kveton}{AR}
\icmlauthor{Csaba Szepesvari}{UA}
\icmlauthor{Zheng Wen}{AR}
\end{icmlauthorlist}

\icmlaffiliation{IR}{Independent Researcher, Vancouver, BC, Canada (Part of this work was done during an internship at Adobe Research)}
\icmlaffiliation{Cz}{Czech Technical University, Prague, Czech Republic}
\icmlaffiliation{DM}{DeepMind, Mountain View, CA, USA (This work was done while the author was at Adobe Research)}
\icmlaffiliation{AR}{Adobe Research, San Jose, CA, USA}
\icmlaffiliation{UA}{University of Alberta, Edmonton, AB, Canada}

\icmlcorrespondingauthor{Branislav Kveton}{kveton@adobe.com}
\icmlcorrespondingauthor{Masrour Zoghi}{masrour@zoghi.org}

\vskip 0.3in]

\printAffiliationsAndNotice{}

\begin{abstract}
Online learning to rank is a core problem in information retrieval and machine learning. Many provably efficient algorithms have been recently proposed for this problem in specific click models. The click model is a model of how the user interacts with a list of documents. Though these results are significant, their impact on practice is limited, because all proposed algorithms are designed for specific click models and lack convergence guarantees in other models. In this work, we propose $\mergerank$, the first online learning to rank algorithm for a broad class of click models. The class encompasses two most fundamental click models, the cascade and position-based models. We derive a gap-dependent upper bound on the $T$-step regret of $\mergerank$ and evaluate it on a range of web search queries. We observe that $\mergerank$ outperforms ranked bandits and is more robust than $\cascadeklucb$, an existing algorithm for the cascade model.
\end{abstract}


\section{Introduction}
\label{sec:introduction}

Learning to rank (LTR) is a core problem in information retrieval \cite{liu11learning} and machine learning; with numerous applications in web search, recommender systems and ad placement. The goal of LTR is to present a list of $K$ documents out of $L$ that maximizes the satisfaction of the user. This problem has been traditionally solved by training supervised learning models on manually annotated relevance judgments. However, strong evidence suggests \cite{agichtein06improving,zoghi16clickbased} that the feedback of users, that is clicks, can lead to major improvements over supervised LTR methods. In addition, billions of users interact daily with commercial LTR systems, and it is finally feasible to interactively and adaptive maximize the satisfaction of these users from clicks.

These observations motivated numerous papers on online LTR methods, which utilize user feedback to improve the quality of ranked lists. These methods can be divided into two groups: learning the best ranker in a family of rankers \cite{yue09interactively,hofmann13reusing}, and learning the best list under some model of user interaction with the list \cite{radlinski08learning,slivkins13ranked}, such as a \emph{click model} \cite{chuklin15click}. The click model is a stochastic model of how the user examines and clicks on a list of documents. In this work, we focus on online LTR in click models and address a shortcoming of all past work on this topic.

More precisely, many algorithms have been proposed and analyzed for finding the optimal ranked list in the cascade model (CM) \cite{kveton15cascading,combes15learning,kveton15combinatorial,zong16cascading,li16contextual}, the dependent-click model (DCM) \cite{katariya16dcm}, and the position-based model (PBM) \cite{lagree16multipleplay}. The problem is that if the user interacts with ranked lists using a different click model, the theoretical guarantees cease to hold. Then, as we show empirically, these algorithms may converge to suboptimal solutions. This is a grave issue because it is well known that no single click model captures the behavior of an entire population of users \cite{grotov15comparative}. Therefore, it is critical to develop efficient learning algorithms for multiple click models, which is the aim of this paper.

We make the following contributions:

\begin{itemize}[leftmargin=*,topsep=0mm,parsep=0pt,itemsep=0pt,partopsep=0pt]
\item We propose \emph{stochastic click bandits}, a learning framework for maximizing the expected number of clicks in online LTR in a broad class of click models, which includes both the PBM \cite{richardson07predicting} and CM \cite{craswell08experimental}.
\item We propose the first algorithm, $\mergerank$, that is guaranteed to learn the optimal solution in a diverse class of click models. This is of a great practical significance, as it is often difficult or impossible to guess the underlying click model in advance.
\item We prove a gap-dependent upper bound on the regret of $\mergerank$ that scales well with all quantities of interest. The key step in our analysis is a KL scaling lemma (\cref{sec:discussion}), which should be of a broader interest.
\item We evaluate $\mergerank$ on both CM and PBM queries. Our results show that $\mergerank$ performs significantly better than $\rankedexpthree$ \cite{radlinski08learning}, an adversarial online LTR algorithm; and is more robust than $\cascadeklucb$ \cite{kveton15cascading}, an optimal online LTR algorithm for the CM.
\end{itemize}

We define $[n] = \set{1, \dots, n}$. For any sets $A$ and $B$, we denote by $A^B$ the set of all vectors whose entries are indexed by $B$ and take values from $A$. We use boldface letters to denote important random variables.


\section{Background}
\label{sec:background}

This section reviews two fundamental \emph{click models} \cite{chuklin15click}, models of how users click on an ordered list of $K$ documents. The universe of all documents is represented by \emph{ground set} $\cD = [L]$ and we refer to the documents in $\cD$ as \emph{items}. The user is presented a \emph{ranked list}, an ordered list of $K$ documents out of $L$. We denote this list by $\cR = (d_1, \dots, d_K) \in \Pi_K(\cD)$, where $\Pi_K(\cD) \subset \cD^K$ is the set of all $K$-tuples with distinct elements from $\cD$ and $d_k$ is the $k$-th item in $\cR$. We assume that the click model is parameterized by $L$ \emph{item-dependent attraction probabilities} $\attrprob \in [0, 1]^L$, where $\attrprob(d)$ is the probability that item $d$ is \emph{attractive}. The items attract the user independently. For simplicity and without loss of generality, we assume that $\attrprob(1) \geq \ldots \geq \attrprob(L)$. The reviewed models differ in how the user examines items, which leads to clicks.

\subsection{Position-Based Model}
\label{sec:PBM}

The \emph{position-based model (PBM)} \cite{richardson07predicting} is a model where the probability of clicking on an item depends on both its identity and position. Therefore, in addition to item-dependent attraction probabilities, the PBM is parameterized by $K$ \emph{position-dependent examination probabilities} $\examprob \in [0, 1]^K$, where $\examprob(k)$ is the examination probability of position $k$.

The user interacts with a list of items $\cR = (d_1, \dots, d_K)$ as follows. The user \emph{examines} position $k \in [K]$ with probability $\examprob(k)$ and then \emph{clicks} on item $d_k$ at that position with probability $\attrprob(d_k)$. Thus, the expected number of clicks on list $\cR$ is
\begin{align*}
  r(\cR) = \sum_{k = 1}^K \examprob(k) \attrprob(d_k)\,.
\end{align*}
In practice, it is often observed that $\examprob(1) \geq \ldots \geq \examprob(K)$ \cite{chuklin15click}, and we adopt this assumption in this work. Under this assumption, the above function is maximized by the list of $K$ most attractive items 
\begin{align}
  \cR^\ast = (1, \dots, K)\,,
  \label{eq:optimal solution}
\end{align}
where the $k$-th most attractive item is placed at position $k$.

In this paper, we focus on the objective of maximizing the number of clicks. We note that the satisfaction of the user may not increase with the number of clicks, and that other objectives have been proposed in the literature \cite{radlinski08how}. The shortcoming of all of these objectives is that none directly measure the satisfaction of the user.

\subsection{Cascade Model}
\label{sec:CM}

In the \emph{cascade model (CM)} \cite{craswell08experimental}, the user scans a list of items $\cR = (d_1, \dots, d_K)$ from the first item $d_1$ to the last $d_K$. If item $d_k$ is \emph{attractive}, the user \emph{clicks} on it and does not examine the remaining items. If item $d_k$ is not attractive, the user \emph{examines} item $d_{k + 1}$. The first item $d_1$ is examined with probability one.

From the definition of the model, the probability that item $d_k$ is examined is equal to the probability that none of the first $k - 1$ items are attractive. Since each item attracts the user independently, this probability is
\begin{align}
  \examprob(\cR, k) = \prod_{i = 1}^{k - 1} (1 - \attrprob(d_i))\,.
  \label{eq:CM examination}
\end{align}
The expected number of clicks on list $\cR$ is at most $1$, and is equal to the probability of observing any click,
\begin{align*}
  r(\cR) =
  \sum_{k = 1}^K \examprob(\cR, k) \attrprob(d_k) =
  1 - \prod_{k = 1}^K (1 - \attrprob(d_k))\,.
\end{align*}
This function is maximized by the list of $K$ most attractive items $\cR^\ast$ in \eqref{eq:optimal solution}, though any permutation of $[K]$ would be optimal in the CM. Note that the list $\cR^\ast$ is optimal in both the PBM and CM.


\section{Online Learning to Rank in Click Models}
\label{sec:online learning to rank}

The PBM and CM (\cref{sec:background}) are similar in many aspects. First, both models are parameterized by $L$ item-dependent attraction probabilities. The items attract the user independently. Second, the probability of clicking on the item is a product of its attraction probability, which depends on the identity of the item; and the examination probability of its position, which is independent of the identity of the item. Finally, the optimal solution in both models is the list of $K$ most attractive items $\cR^\ast$ in \eqref{eq:optimal solution}, where the $k$-th most attractive item is placed at position $k$.

This suggests that the optimal solution in both models can be learned by a single learning algorithm, which does not know the underlying model. We propose this algorithm in \cref{sec:algorithm}. Before we discuss it, we formalize our learning problem as a \emph{multi-armed bandit} \cite{auer02finitetime,lai85asymptotically}.

\subsection{Stochastic Click Bandit}
\label{sec:stochastic click bandit}

We refer to our learning problem as a \emph{stochastic click bandit}. An instance of this problem is a tuple $(K, L, \allowbreak P_\attrprob, P_\examprob)$, where $K$ is the number of positions, $L$ is the number of items, $P_\attrprob$ is a distribution over binary vectors $\set{0, 1}^L$, and $P_\examprob$ is a distribution over binary matrices $\set{0, 1}^{\Pi_K(\cD) \times K}$.

The learning agent interacts with our problem as follows. Let $(\attrRV_t, \examRV_t)_{t = 1}^T$ be $T$ i.i.d. random variables drawn from $P_\attrprob \otimes P_\examprob$, where $\attrRV_t \in \set{0, 1}^L$ and $\attrRV_t(d)$ is the \emph{attraction indicator} of item $d$ at time $t$; and $\examRV_t \in \set{0, 1}^{\Pi_K(\cD) \times K}$ and $\examRV_t(\cR, k)$ is the \emph{examination indicator} of position $k$ in list $\cR \in \Pi_K(\cD)$ at time $t$. At time $t$, the agent chooses a list $\rnd{\cR}_t = (\rnd{d}^t_1, \dots, \rnd{d}^t_K)\in \Pi_K(\cD)$, which depends on past observations of the agent, and then observes \emph{clicks}. These clicks are a function of $\rnd{\cR}_t$, $\attrRV_t$, and $\examRV_t$. Let $\rnd{c}_t \in \set{0, 1}^K$ be the vector of \emph{click indicators} on all positions at time $t$. Then
\begin{align*}
  \rnd{c}_t(k) = \examRV_t(\rnd{\cR}_t, k) \attrRV_t(\rnd{d}^t_k)
\end{align*}
for any $k \in [K]$, the item at position $k$ is clicked only if both $\examRV_t(\rnd{\cR}_t, k) = 1$ and $\attrRV_t(\rnd{d}^t_k) = 1$.

The goal of the learning agent is to maximize the number of clicks. Therefore, the number of clicks at time $t$ is the \emph{reward} of the agent at time $t$. We define it as
\begin{align}
  \rnd{r}_t =
  \sum_{k = 1}^K \rnd{c}_t(k) =
  r(\rnd{\cR}_t, \attrRV_t, \examRV_t)\,,
  \label{eq:reward}
\end{align}
where $r: \Pi_K(\cD) \times [0, 1]^L \times [0, 1]^{\Pi_K(\cD) \times K} \to [0, K]$ is a \emph{reward function}, which we define for any $\cR \in \Pi_K(\cD)$, $\attrval \in [0, 1]^L$, and $\examval \in [0, 1]^{\Pi_K(\cD) \times K}$ as
\begin{align*}
  r(\cR, \attrval, \examval) = \sum_{k = 1}^K \examval(\cR, k) \attrval(d_k)\,.
\end{align*}

We adopt the same independence assumptions as in \cref{sec:background}. In particular, we assume that items attract the user independently.

\begin{assumption}
\label{ass:attraction independence} For any $\attrval \in \set{0, 1}^L$,
\begin{align*}
  \textstyle
  P(\attrRV_t = \attrval) = \prod_{d \in \cD} \mathrm{Ber}(\attrval(d); \attrprob(d))\,,
\end{align*}
where $\mathrm{Ber}(\cdot; \theta)$ denotes the probability mass function of a Bernoulli distribution with mean $\theta \in [0, 1]$, which we define as $\mathrm{Ber}(y; \theta) = \theta^y (1 - \theta)^{1 - y}$ for any $y \in \set{0, 1}$.
\end{assumption}

Moreover, we assume that the attraction of any item is independent of the examination of its position.

\begin{assumption}
\label{ass:attraction-examination independence} For any list $\cR \in \Pi_K(\cD)$ and position $k$,
\begin{align*}
  \condE{\rnd{c}_t(k)}{\rnd{\cR}_t = \cR} =
  \examprob(\cR, k) \attrprob(d_k)\,,
\end{align*}
where $\examprob \in [0, 1]^{\Pi_K(\cD) \times K}$ and $\examprob(\cR, k) = \E{\examRV_t(\cR, k)}$ is the \emph{examination probability} of position $k$ in list $\cR$.
\end{assumption}

We do not make any independence assumptions among the entries of $\examRV_t$, and on other interactions of $\attrRV_t$ and $\examRV_t$.

From our independence assumptions and the definition of the reward in \eqref{eq:reward}, the \emph{expected reward} of list $\cR$ is
\begin{align*}
  \E{r(\cR, \attrRV_t, \examRV_t)} =
  \sum_{k = 1}^K \examprob(\cR, k) \attrprob(d_k) =
  r(\cR, \attrprob, \examprob)\,.
\end{align*}
We evaluate the performance of a learning agent by its \emph{expected cumulative regret}
\begin{align*}
  R(T) = \E{\sum_{t = 1}^T R(\rnd{\cR}_t, \attrRV_t, \examRV_t)}\,,
\end{align*}
where $R(\rnd{\cR}_t, \attrRV_t, \examRV_t) = r(\cR^\ast, \attrRV_t, \examRV_t) - r(\rnd{\cR}_t, \attrRV_t, \examRV_t)$ is the \emph{instantaneous regret} of the agent at time $t$ and
\begin{align*}
  \textstyle
  \cR^\ast = \argmax_{\cR \in \Pi_K(\cD)} r(\cR, \attrprob, \examprob)
\end{align*}
is the \emph{optimal list} of items, the list that maximizes the expected reward. To simplify exposition, we assume that the optimal solution, as a set, is unique.

\subsection{Position Bandit}
\label{sec:position bandit}

The learning variant of the PBM in \cref{sec:PBM} can be formulated in our setting when
\begin{align}
  \forall \cR, \cR' \in \Pi_K(\cD): \ \examRV_t(\cR, k) = \examRV_t(\cR', k)
  \label{eq:CM examination indicator}
\end{align}
at any position $k \in [K]$. Under this assumption, the probability of clicking on item $\rnd{d}^t_k$ at time $t$ is
\begin{align*}
  \condE{\rnd{c}_t(k)}{\rnd{\cR}_t} =
  \examprob(k) \attrprob(\rnd{d}^t_k)\,,
\end{align*}
where $\examprob(k)$ is defined in \cref{sec:PBM}. The expected reward of list $\rnd{\cR}_t$ at time $t$ is
\begin{align*}
  \condE{\rnd{r}_t}{\rnd{\cR}_t} =
  \sum_{k = 1}^K \examprob(k) \attrprob(\rnd{d}^t_k)\,.
\end{align*}

\subsection{Cascading Bandit}
\label{sec:cascading bandit}

The learning variant of the CM in \cref{sec:CM} can be formulated in our setting when
\begin{align}
  \examRV_t(\cR, k) = \prod_{i = 1}^{k - 1} (1 - \attrRV_t(d_i))
  \label{eq:PBM examination indicator}
\end{align}
for any list $\cR\in \Pi_K(\cD)$ and position $k \in [K]$. Under this assumption, the probability of clicking on item $\rnd{d}^t_k$ at time $t$ is
\begin{align*}
  \condE{\rnd{c}_t(k)}{\rnd{\cR}_t} =
  \left[\prod_{i = 1}^{k - 1} (1 - \attrprob(\rnd{d}^t_i))\right] \attrprob(\rnd{d}^t_k)\,.
\end{align*}
The expected reward of list $\rnd{\cR}_t$ at time $t$ is
\begin{align*}
  \condE{\rnd{r}_t}{\rnd{\cR}_t} =
  \sum_{k = 1}^K \left[\prod_{i = 1}^{k - 1} (1 - \attrprob(\rnd{d}^t_i))\right] \attrprob(\rnd{d}^t_k)\,.
\end{align*}

\subsection{Additional Assumptions}
\label{sec:additional assumptions}

The above assumptions are not sufficient to guarantee that the optimal list $\cR^\ast$ in \eqref{eq:optimal solution} is learnable. Therefore, we make four additional assumptions, which are quite natural.

\begin{assumption}[Order-independent examination]
\label{ass:order-independent examination} For any lists $\cR \in \Pi_K(\cD)$ and $\cR' \in \Pi_K(\cD)$, and position $k \in [K]$ such that $d_k = d_k'$ and $\set{d_1,\dots, d_{k - 1}} = \set{d'_1, \dots, d'_{k - 1}}$, $\examRV_t(\cR, k) = \examRV_t(\cR', k)$.
\end{assumption}

The above assumption says that the examination indicator $\examRV_t(\cR, k)$ only depends on the identities of $d_1, \dots, d_{k - 1}$. Both the CM and PBM satisfy this assumption, which can be validated from \eqref{eq:CM examination indicator} and \eqref{eq:PBM examination indicator}.

\begin{assumption}[Decreasing examination]
\label{ass:decreasing examination} For any list $\cR \in \Pi_K(\cD)$ and positions $1 \leq i \leq j \leq K$, $\examprob(\cR, i) \geq \examprob(\cR, j)$.
\end{assumption}

The above assumption says that a lower position cannot be examined more than a higher position, in any list $\cR$. Both the CM and PBM satisfy this assumption.

\begin{assumption}[Correct examination scaling]
\label{ass:examination scaling} For any list $\cR \in \Pi_K(\cD)$ and positions $1 \leq i \leq j \leq K$, let $\attrprob(d_i) \leq \attrprob(d_j)$ and $\cR' \in \Pi_K(\cD)$ be the same list as $\cR$ except that $d_i$ and $d_j$ are exchanged. Then $\examprob(\cR, j) \geq \examprob(\cR', j)$.
\end{assumption}

The above assumption says that the examination probability of a position cannot increase if the item at that position is swapped for a less-attractive higher-ranked item, in any list $\cR$. Both the CM and PBM satisfy this assumption. In the CM, the inequality follows directly from the definition of examination in \eqref{eq:CM examination}. In the PBM, $\examprob(\cR, j) = \examprob(\cR', j)$.

\begin{assumption}[Optimal examination]
\label{ass:optimal examination} For any list $\cR \in \Pi_K(\cD)$ and position $k \in [K]$, $\examprob(\cR, k) \geq \examprob(\cR^\ast, k)$.
\end{assumption}

This assumption says that any position $k$ is least examined if the first $k - 1$ items are optimal. Both the CM and PBM satisfy this assumption. In the CM, the inequality follows from the definition of examination in \eqref{eq:CM examination}. In the PBM, we have that $\examprob(\cR, k) = \examprob(\cR^\ast, k)$.


\section{Algorithm $\mergerank$}
\label{sec:algorithm}

The design of $\mergerank$ (\cref{alg:merge rank}) builds on two key ideas. First, we \emph{randomize} the placement of items to avoid biases due to the click model. Second, we \emph{divide and conquer}; recursively divide the batches of items into more and less attractive items. The result is a sorted list of $K$ items, where the $k$-th most attractive item is placed at position $k$.

\begin{algorithm}[t]
  \caption{$\mergerank$}
  \label{alg:merge rank}
  \begin{algorithmic}[1]
    \STATE // Initialization
    \FOR{$b = 1, \dots, 2 K$}
      \FOR{$\ell = 0, \dots, T - 1$}
        \FORALL{$d \in \cD$}
          \STATE $\rnd{c}_{b, \ell}(d) \gets 0, \ \rnd{n}_{b, \ell}(d) \gets 0$
        \ENDFOR
      \ENDFOR
    \ENDFOR \vspace{0.1in}
    \STATE $\activeBatches \gets \set{1}, \ b_{\max} \gets 1$ \\
    \STATE $\rnd{I}_1 \gets (1, K), \ \rnd{B}_{1, 0} \gets \cD,\ \ell_1 \gets 0$
    \FOR{$t = 1, \dots, T$}
      \FORALL{$b \in \activeBatches$}
        \STATE ${\tt DisplayBatch}(b, t)$
      \ENDFOR
      \FORALL{$b \in \activeBatches$}
        \STATE ${\tt CollectClicks}(b, t)$
      \ENDFOR
      \FORALL{$b \in \activeBatches$}
        \STATE ${\tt UpdateBatch}(b, t)$
      \ENDFOR
    \ENDFOR
  \end{algorithmic}
\end{algorithm}

$\mergerank$ explores items in batches, which are indexed by integers $b > 0$. A \emph{batch} $b$ is associated with the initial set of items $\rnd{B}_{b, 0} \subseteq \cD$ and a range of \emph{positions} $\rnd{I}_b \in [K]^2$, where $\rnd{I}_b(1)$ is the \emph{highest position} in batch $b$, $\rnd{I}_b(2)$ is the \emph{lowest position} in batch $b$, and $\len(b) = \rnd{I}_b(2) - \rnd{I}_b(1) + 1$ is \emph{number of positions} in batch $b$. The batch is explored in \emph{stages}, which we index by integers $\ell > 0$. The remaining items in stage $\ell$ of batch $b$ are $\rnd{B}_{b, \ell} \subseteq \rnd{B}_{b, 0}$. The lengths of the stages quadruple. More precisely, any item $d \in \rnd{B}_{b, \ell}$ in stage $\ell$ is explored $n_\ell$ times, where $n_\ell = \ceils{16 \tilde{\Delta}_\ell^{-2} \log T}$ and $\tilde{\Delta}_\ell = 2^{- \ell}$. The current stage of batch $b$ is $\ell_b$.

Method ${\tt DisplayBatch}$ (\cref{alg:display batch}) explores batches as follows. In stage $\ell$ of batch $b$, we randomly choose $\len(b)$ least observed items in $\rnd{B}_{b, \ell}$ and display them at randomly chosen positions in $\rnd{I}_b$. If the number of these items is less than $\len(b)$, we mix them with randomly chosen more observed items, which are not explored. This exploration has two notable properties. First, it is uniform in the sense that no item in $\rnd{B}_{b, \ell}$ is explored more than once than any other item in $\rnd{B}_{b, \ell}$. Second, any item in $\rnd{B}_{b, \ell}$ appears in any list over $\rnd{B}_{b, \ell}$ with that item with the same probability. This is critical to avoid biases due to click models.

Method ${\tt CollectClicks}$ (\cref{alg:collect clicks}) collects feedback. We denote the \emph{number of observations} of item $d$ in stage $\ell$ of batch $b$ by $\rnd{n}_{b, \ell}(d)$ and the \emph{number of clicks} on that item by $\rnd{c}_{b, \ell}(d)$. At the end of the stage, all items $d \in \rnd{B}_{b, \ell}$ are observed exactly $n_\ell$ times and we estimate the probability of clicking on item $d$ as
\begin{align}
  \hat{\rnd{c}}_{b, \ell}(d) = \rnd{c}_{b, \ell}(d) / n_\ell\,.
  \label{eq:click estimator}
\end{align}

Method ${\tt UpdateBatch}$ (\cref{alg:update batch}) updates batches and has three main parts. First, we compute $\klucb$ upper and lower confidence bounds \cite{garivier11klucb} for all items $d \in \rnd{B}_{b, \ell}$ (lines $5$--$6$),
\begin{align*}
  \rnd{U}_{b, \ell}(d)
  & \gets \argmax_{q \in [\hat{\rnd{c}}_{b, \ell}(d), \, 1]} 
  \set{n_\ell \, \kl{\hat{\rnd{c}}_{b, \ell}(d)}{q} \leq \delta_T}, \\
  \rnd{L}_{b, \ell}(d)
  & \gets \argmin_{q \in [0, \, \hat{\rnd{c}}_{b, \ell}(d)]} 
  \set{n_\ell \, \kl{\hat{\rnd{c}}_{b, \ell}(d)}{q} \leq \delta_T},
\end{align*}
where $\kl{p}{q}$ denotes the \emph{Kullback-Leibler divergence} between Bernoulli random variables with means $p$ and $q$, and $\delta_T = \log T + 3 \log \log T$. Then we test whether batch $b$ can be safely divided into $s$ more attractive items and the rest (lines $7$--$15$). If it can, we split the batch into two new batches (lines $21$--$27$). The first batch contains $s$ items and is over positions $\rnd{I}_b(1), \dots, \rnd{I}_b(1) + s - 1$; and the second batch contains the remaining items and is over the remaining positions. The stage indices of new batches are initialized to $0$. If the batch can be divided at multiple positions $s$, we choose the highest $s$. If the batch is not divided, we eliminate items that cannot be at position $\rnd{I}_b(2)$ or higher with a high probability (lines $17$--$19$).

\begin{algorithm}[t]
  \caption{${\tt DisplayBatch}$}
  \label{alg:display batch}
  \begin{algorithmic}[1]
    \STATE \textbf{Input:} batch index $b$, time $t$ \vspace{0.1in}
    \STATE $\ell \gets \ell_b, \ n_{\min} \gets \min_{d \in \rnd{B}_{b, \ell}} \rnd{n}_{b, \ell}(d)$
    \STATE Let $d_1, \dots, d_{\abs{\rnd{B}_{b, \ell}}}$ be a random permutation of items $\rnd{B}_{b, \ell}$ such that $\rnd{n}_{b, \ell}(d_1) \leq \ldots \leq \rnd{n}_{b, \ell}(d_{\abs{\rnd{B}_{b, \ell}}})$
    \STATE Let $\pi \in \Pi_{\len(b)}([\len(b)])$ be a random permutation of position assignments
    \FOR{$k = \rnd{I}_b(1), \dots, \rnd{I}_b(2)$}
      \STATE $\rnd{d}^t_k \gets d_{\pi(k - \rnd{I}_b(1) + 1)}$
    \ENDFOR
  \end{algorithmic}
\end{algorithm}

\begin{algorithm}[t]
  \caption{${\tt CollectClicks}$}
  \label{alg:collect clicks}
  \begin{algorithmic}[1]
    \STATE \textbf{Input:} batch index $b$, time $t$ \vspace{0.1in}
    \STATE $\ell \gets \ell_b, \ n_{\min} \gets \min_{d \in \rnd{B}_{b, \ell}} \rnd{n}_{b, \ell}(d)$
    \FOR{$k = \rnd{I}_b(1), \dots, \rnd{I}_b(2)$}
      \IF{$\rnd{n}_{b, \ell}(\rnd{d}^t_k) = n_{\min}$}
        \STATE $\rnd{c}_{b, \ell}(\rnd{d}^t_k) \gets \rnd{c}_{b, \ell}(\rnd{d}^t_k) + \rnd{c}_t(k)$
        \STATE $\rnd{n}_{b, \ell}(\rnd{d}^t_k) \gets \rnd{n}_{b, \ell}(\rnd{d}^t_k) + 1$
      \ENDIF
    \ENDFOR
  \end{algorithmic}
\end{algorithm}

\begin{algorithm}[t]
  \caption{${\tt UpdateBatch}$}
  \label{alg:update batch}
  \begin{algorithmic}[1]
    \STATE \textbf{Input:} batch index $b$, time $t$ \vspace{0.1in}
    \STATE // End-of-stage elimination
    \STATE $\ell \gets \ell_b$
    \IF{$\min_{d \in \rnd{B}_{b, \ell}} \rnd{n}_{b, \ell}(d) = n_\ell$}
      \FORALL{$d \in \rnd{B}_{b, \ell}$}
        \STATE Compute $\rnd{U}_{b, \ell}(d)$ and $\rnd{L}_{b, \ell}(d)$
      \ENDFOR
      \STATE Let $d_1, \dots, d_{\abs{\rnd{B}_{b, \ell}}}$ be any permutation of items $\rnd{B}_{b, \ell}$ such that $\rnd{L}_{b, \ell}(d_1) \geq \ldots \geq \rnd{L}_{b, \ell}(d_{\abs{\rnd{B}_{b, \ell}}})$
      \FOR{$k = 1, \dots, \len(b)$}
        \STATE $B^+_k \gets \set{d_1, \dots, d_k}$
        \STATE $B^-_k \gets \rnd{B}_{b, \ell} \setminus B^+_k$
      \ENDFOR \vspace{0.1in}
      \STATE // Find a split at the position with the highest index
      \STATE $s \gets 0$
      \FOR{$k = 1, \dots, \len(b) - 1$}
        \IF{$\rnd{L}_{b, \ell}(d_k) > \max_{d \in B^-_k} \rnd{U}_{b, \ell}(d)$}
          \STATE $s \gets k$
        \ENDIF
      \ENDFOR
      \IF{$(s = 0) \text{ and } (\abs{\rnd{B}_{b, \ell}} > \len(b))$}
        \STATE // Next elimination stage
        \STATE $\rnd{B}_{b, \ell + 1} \gets \set{d \in \rnd{B}_{b, \ell}: \rnd{U}_{b, \ell}(d) \geq \rnd{L}_{b, \ell}(d_{\len(b)})}$
        \STATE $\ell_b \gets \ell_b + 1$
      \ELSIF{$s > 0$}
        \STATE // Split
        \STATE $\activeBatches \gets \activeBatches \cup \set{b_{\max} + 1, b_{\max} + 2} \setminus \set{b}$
        \STATE $\rnd{I}_{b_{\max} + 1} \gets (\rnd{I}_b(1), \rnd{I}_b(1) + s - 1)$
        \STATE $\rnd{B}_{b_{\max} + 1, 0} \gets B^+_s, \ \ell_{b_{\max} + 1} \gets 0$
        \STATE $\rnd{I}_{b_{\max} + 2} \gets (\rnd{I}_b(1) + s, \rnd{I}_b(2))$
        \STATE $\rnd{B}_{b_{\max} + 2, 0} \gets B^-_s, \ \ell_{b_{\max} + 2} \gets 0$
        \STATE $b_{\max} \gets b_{\max} + 2$
      \ENDIF
    \ENDIF
  \end{algorithmic}
\end{algorithm}

The set of active batches is denoted by $\activeBatches$, and we explore and update these batches in parallel. The highest index of the latest added batch is $b_{\max}$. Note that $b_{\max} \leq 2 K$, because any batch with at least two items is split at a unique position into two batches. $\mergerank$ is initialized with a single batch over all positions and items (lines $6$--$7$).

Note that by the design of ${\tt UpdateBatch}$, the following invariants hold. First, the positions of active batches $\activeBatches$ are a partition of $[K]$ at any time $t$. Second, any batch contains at least as many items as is the number of the positions in that batch. Finally, when $\rnd{I}_b(2) < K$, the number of items in batch $b$ is equal to the number of its positions.


\section{Analysis}
\label{sec:analysis}

In this section, we state our regret bound for $\mergerank$. Before we do so, we discuss our estimator of clicks in \eqref{eq:click estimator}. In particular, we show that \eqref{eq:click estimator} is the attraction probability of item $d$ scaled by the average examination probability in stage $\ell$ of batch $b$. The examination scaling preserves the order of attraction probabilities, and therefore $\mergerank$ can operate on \eqref{eq:click estimator} in place of $\attrprob(d)$.

\begin{figure*}[t]
  \centering
  \includegraphics{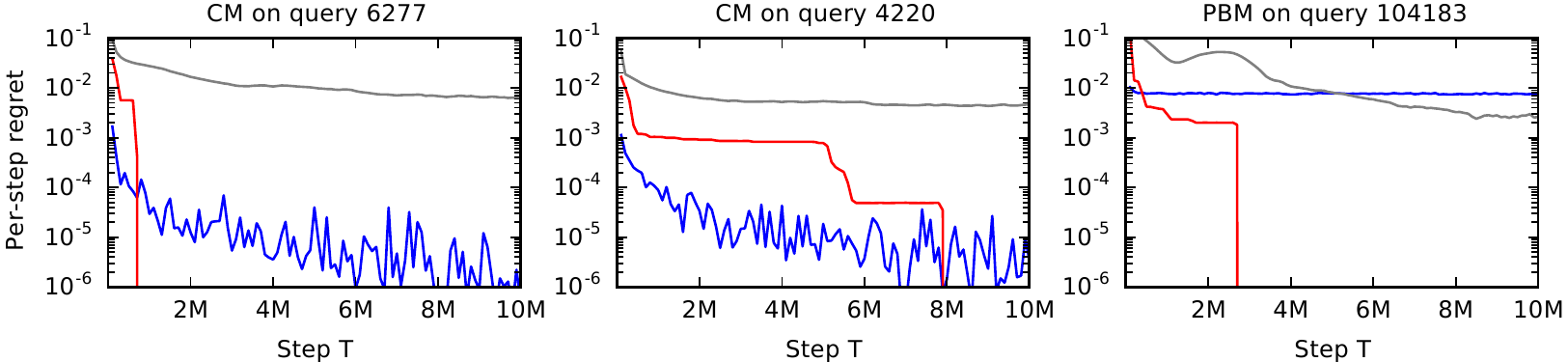}
  \vspace{-0.05in}
  \caption{The expected per-step regret of $\mergerank$ (red), $\cascadeklucb$ (blue), and $\rankedexpthree$ (gray) on three problems. The results are averaged over $10$ runs.}
  \label{fig:queries}
\end{figure*}

\subsection{Confidence Radii}
\label{sec:confidence radii}

Fix batch $b$, positions $\rnd{I}_b$, stage $\ell$, and items $\rnd{B}_{b, \ell}$. Then for any item $d \in \rnd{B}_{b,\ell}$, we can write the estimator in \eqref{eq:click estimator} as
\begin{align}
  \hat{\rnd{c}}_{b, \ell}(d) =
  \frac{1}{n_\ell} \sum_{t \in \mathcal{T}} \sum_{k = \rnd{I}_b(1)}^{\rnd{I}_b(2)} \rnd{c}_t(k) \I{\rnd{d}^t_k = d}
  \label{eq:click estimator 2}
\end{align}
for some set of $n_\ell$ time steps $\mathcal{T}$ and its expected value is
\begin{align}
  \bar{\rnd{c}}_{b, \ell}(d) = \E{\hat{\rnd{c}}_{b, \ell}(d)}\,.
  \label{eq:click probability}
\end{align}
The key step in the design of $\mergerank$ is that we maintain confidence radii around \eqref{eq:click estimator 2}. This is sound because the observations in \eqref{eq:click estimator 2},
\begin{align}
  \set{\examRV_t(\rnd{\cR}_t, k) \attrRV_t(d)}_{t \in \set{t \in \mathcal{T}: \, \rnd{d}^t_k = d}}
  \label{eq:observations}
\end{align}
at any position $k$, are i.i.d. in time. More precisely, by the design of ${\tt DisplayBatch}$, all displayed items from batch $b$ are chosen randomly from $\rnd{B}_{b, \ell}$; and independently of the realizations of $\examRV_t(\rnd{\cR}_t, k)$ and $\attrRV_t(d)$, which are random as well. The last problem is that the policy for placing items at positions $1, \dots, \rnd{I}_b(1) - 1$ can change independently of batch $b$ because $\mergerank$ splits batches independently. But this has no effect on $\examRV_t(\rnd{\cR}_t, k)$ because the examination of position $k$ does not depend on the order of higher ranked items (\cref{ass:order-independent examination}).

\subsection{Correct Examination Scaling}
\label{sec:correct examination scaling}

Fix batch $b$, positions $\rnd{I}_b$, stage $\ell$, and items $\rnd{B}_{b, \ell}$. Since the examination of a position does not depend on the order of higher ranked items, and does not depend on lower ranked items at all (\cref{ass:order-independent examination}), we can express the probability of clicking on any item $d \in \rnd{B}_{b, \ell}$ in \eqref{eq:click estimator 2} as
\begin{align}
  \bar{\rnd{c}}_{b, \ell}(d) =
  \frac{\attrprob(d)}{\abs{\mathcal{S}_d}} \sum_{\cR \in \mathcal{S}_d} \sum_{k = \rnd{I}_b(1)}^{\rnd{I}_b(2)}
  \examprob(\cR, k) \I{d_k = k}\,,
  \label{eq:click probability 2}
\end{align}
where
\begin{align*}
  \mathcal{S}_d =
  \big\{&(e_1, \dots, e_{\rnd{I}_b(2)}): d \in \set{e_{\rnd{I}_b(1)}, \dots, e_{\rnd{I}_b(2)}}\,, \\
  & (e_{\rnd{I}_b(1)}, \dots, e_{\rnd{I}_b(2)}) \in \Pi_{\len(b)}(\rnd{B}_{b, \ell})\big\}
\end{align*}
is the set of all lists with permutations of $\rnd{B}_{b, \ell}$ on positions $\rnd{I}_b$ that contain item $d$, for some fixed higher ranked items $e_1, \dots, e_{\rnd{I}_b(1) - 1} \notin \rnd{B}_{b, \ell}$. Let $d^\ast \in \rnd{B}_{b, \ell}$ be any item such that $\attrprob(d^\ast) \geq \attrprob(d)$, and $\bar{\rnd{c}}_{b, \ell}(d^\ast)$ and $\mathcal{S}_{d^\ast}$ be defined analogously to $\bar{\rnd{c}}_{b, \ell}(d)$ and $\mathcal{S}_d$ above. Then we argue that
\begin{align}
  \bar{\rnd{c}}_{b, \ell}(d^\ast) / \attrprob(d^\ast) \geq
  \bar{\rnd{c}}_{b, \ell}(d) / \attrprob(d)\,,
  \label{eq:correct examination scaling}
\end{align}
the examination scaling of a less attractive item $d$ is never higher than that of a more attractive item $d^\ast$.

Before we prove \eqref{eq:correct examination scaling}, note that for any list $\cR \in \mathcal{S}_d$, there exists one and only one list in $\mathcal{S}_{d^\ast}$ that differs from $\cR$ only in that items $d$ and $d^\ast$ are exchanged. Let this list be $\cR^\ast$. We analyze three cases. First, suppose that list $\cR$ does not contain item $d^\ast$. Then by \cref{ass:order-independent examination}, the examination probabilities of $d$ in $\cR$ and $d^\ast$ in $\cR^\ast$ are the same. Second, let item $d^\ast$ be ranked higher than item $d$ in $\cR$. Then by \cref{ass:examination scaling}, the examination probability of $d$ in $\cR$ is not higher than that of $d^\ast$ in $\cR^\ast$. Third, let item $d^\ast$ be ranked lower than item $d$ in $\cR$. Then by \cref{ass:order-independent examination}, the examination probabilities of $d$ in $\cR$ and $d^\ast$ in $\cR^\ast$ are the same, since they do not depend on lower ranked items. Finally, from the definition in \eqref{eq:click probability 2} and that $\abs{\mathcal{S}_d} = \abs{\mathcal{S}_{d^\ast}}$, we have that \eqref{eq:correct examination scaling} holds.

\subsection{Regret Bound}
\label{sec:regret bound}

For simplicity of exposition, let $\attrprob(1) > \ldots > \attrprob(L) > 0$. Let $\attrprob_{\max} = \attrprob(1)$, and $\examprob^\ast(k) = \examprob(\cR^\ast, k)$ for all $k \in [K]$. The regret of $\mergerank$ is bounded below.

\begin{theorem}
\label{thm:upper bound} For any stochastic click bandit in \cref{sec:stochastic click bandit} that satisfies Assumptions \ref{ass:attraction independence} to \ref{ass:optimal examination} and $T \geq 5$, the expected $T$-step regret of $\mergerank$ is bounded as
\begin{align*}
  R(T) \leq
  \frac{192 K^3 L}{(1 - \attrprob_{\max}) \Delta_{\min}} \log T + 4 K L (3 e + K)\,,
\end{align*}
where $\Delta_{\min} = \min_{k \in [K]} \set{\attrprob(k) - \attrprob(k + 1)}$.
\end{theorem}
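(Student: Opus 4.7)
The plan is to follow the standard explore-then-eliminate analysis scheme: define a high-probability ``good event'' on which all KL confidence intervals used by Algorithm~\ref{alg:update batch} are valid, bound the regret on the complementary bad event by a constant, and on the good event track how long any suboptimal item can survive in its batch before being eliminated or split away. Summing the per-stage exploration cost over at most $2K$ batches, at most $L$ items per batch, and the stages will yield the main term.

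\emph{Good event and bad-event regret.} Let $\mathcal{G}$ be the event that for every batch $b$, every stage $\ell$ reached by $b$, and every item $d\in\rnd{B}_{b,\ell}$, the true mean $\bar{\rnd{c}}_{b,\ell}(d)$ from \eqref{eq:click probability} lies in $[\rnd{L}_{b,\ell}(d),\rnd{U}_{b,\ell}(d)]$. \cref{sec:confidence radii} verifies that the observations feeding $\hat{\rnd{c}}_{b,\ell}(d)$ in \eqref{eq:click estimator 2} are i.i.d.\ Bernoulli trials after conditioning on the batch structure, so the Garivier--Capp\'e KL-UCB deviation inequality with $\delta_T=\log T+3\log\log T$ gives per-interval failure probability of order $e/T$. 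Union-bounding over at most $2K$ batches, $L$ items per batch, and the stages, the expected regret accumulated on $\mathcal{G}^c$ is a constant, which I expect to match (up to rearrangement) the additive $4KL(3e+K)$ term in the statement.

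\emph{Correctness of splits on $\mathcal{G}$.} The structural ingredient is the correct-examination-scaling inequality \eqref{eq:correct examination scaling}, which shows that within any batch, $\bar{\rnd{c}}_{b,\ell}(d)$ is monotone in $\attrprob(d)$. On $\mathcal{G}$ this monotonicity transfers to the KL confidence bounds, so whenever the split test in Algorithm~\ref{alg:update batch} fires at some position $s$, the $s$ items with the largest lower confidence bounds are exactly the $s$ most attractive items of the batch (up to ties), and any item removed in the no-split branch is strictly less attractive than the $\len(b)$-th best item. An induction over the history of splits then gives that every surviving item in a batch with lowest position $k$ belongs to the overall top-$k$ items, so the algorithm's ranking converges to $\cR^\ast$ from \eqref{eq:optimal solution}.

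\emph{Stopping stage via KL scaling and final aggregation.} The main technical step is the KL scaling lemma previewed in the introduction. By the symmetry of the random permutation in Algorithm~\ref{alg:display batch}, each item in batch $b$ is shown at a uniformly random position in $\rnd{I}_b$, so by \eqref{eq:click probability 2} there is an average examination level $\bar{\examprob}_b$ such that $\bar{\rnd{c}}_{b,\ell}(d)=\bar{\examprob}_b\,\attrprob(d)$ for every $d\in\rnd{B}_{b,\ell}$; the fact that this factor is common to all items is precisely what the symmetric exploration buys us. Hence the observable gap between two items with attraction gap $\Delta$ is at least $\bar{\examprob}_b\Delta$. The KL scaling lemma then shows that the number of samples needed for the KL intervals of such a pair to separate is of order $\bar{\examprob}_b^{-1}\Delta^{-2}\log T$ rather than the na\"ive $\bar{\examprob}_b^{-2}\Delta^{-2}\log T$; obtaining only a single $\bar{\examprob}_b^{-1}$ instead of two is the main obstacle, because $\kl{\bar{\examprob}_b p}{\bar{\examprob}_b q}$ is not a simple multiple of $\kl{p}{q}$ and the argument must exploit the fact that the Bernoulli parameters are both small. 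Granting the lemma, a suboptimal item cannot survive past the stage $\ell$ with $\tilde{\Delta}_\ell\lesssim\bar{\examprob}_b\Delta_{\min}$, so its total exploration cost is of order $(\bar{\examprob}_b\Delta_{\min})^{-1}\log T$ with per-step regret at most $K$. Using \cref{ass:optimal examination} together with \cref{ass:decreasing examination} to show $\bar{\examprob}_b\gtrsim(1-\attrprob_{\max})/K^2$, and summing across the $2K$ possible batches and $L$ items per batch, yields the main term $\frac{192K^3L}{(1-\attrprob_{\max})\Delta_{\min}}\log T$.
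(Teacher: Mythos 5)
Your high-level architecture matches the paper's: a good event on which the KL confidence intervals behave, a constant bad-event contribution, and a per-batch argument that uses the KL scaling lemma to make the sample complexity scale with $\examprob^{-1}$ rather than $\examprob^{-2}$. However, two of your structural claims are false as stated, and they sit exactly where the real difficulty of the proof lies. First, you assert that the random permutation in ${\tt DisplayBatch}$ yields a \emph{common} scaling factor $\bar{\examprob}_b$ with $\bar{\rnd{c}}_{b,\ell}(d)=\bar{\examprob}_b\,\attrprob(d)$ for every $d$ in the batch. This holds in the PBM but fails in the CM: the examination probability of a position depends on which items are ranked above it, and when $d$ occupies position $j$ the items above it are drawn from $\rnd{B}_{b,\ell}\setminus\set{d}$, which differs across $d$. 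The paper only establishes the one-sided monotonicity \eqref{eq:correct examination scaling}, i.e.\ $\avgexamprob_{b,\ell}(d^\ast)\geq\avgexamprob_{b,\ell}(d)$ when $\attrprob(d^\ast)\geq\attrprob(d)$, and the separation argument (\cref{lem:separation}) is carefully written to use only this inequality. Second, your final aggregation needs $\bar{\examprob}_b\gtrsim(1-\attrprob_{\max})/K^2$, and no such bound exists: in the CM, $\examprob^\ast(k)=\prod_{i<k}(1-\attrprob(i))$ can be exponentially small in $k$. The correct accounting is a cancellation, not a lower bound: the per-step regret of a misplaced item with gap $\Delta$ is itself proportional to $\examprob^\ast(k)$, so it cancels the $\examprob^\ast(k)^{-1}$ in the number of samples $O\bigl(K\,\examprob^\ast(k)^{-1}(1-\attrprob_{\max})^{-1}\Delta^{-2}\log T\bigr)$ needed before separation; the $(1-\attrprob_{\max})^{-1}$ comes from the KL scaling lemma, not from bounding the examination probability. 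This is precisely the ``key observation'' of \cref{sec:discussion} and it is absent from your write-up.

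Two smaller points. Your good event contains only the validity of the confidence intervals; the paper additionally conditions on concentration of $\hat{\rnd{c}}_{b,\ell}(d)$ around $\avgexamprob_{b,\ell}(d)[\attrprob(d)\pm\Delta/4]$ (Events $2$ and $3$, proved via \cref{lem:KL Hoeffding}), without which validity of $\rnd{U}_{b,\ell}(d)$ does not force it \emph{below} $\rnd{L}_{b,\ell}(d^\ast)$ at the claimed stage. And running every item down to the stage where $\tilde{\Delta}_\ell\lesssim\Delta_{\min}$ does not give the $\Delta_{\min}^{-1}$ dependence; you need the paper's case split in \cref{lem:batch regret} (items with gap at most $2K\Delta_{\max}$ survive until the batch splits but incur small per-step regret, while items with larger gaps are eliminated at a stage governed by their own gap). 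Your exponents are also internally inconsistent: with a threshold $\tilde{\Delta}_\ell\lesssim\bar{\examprob}_b\Delta$ you would get $n_\ell\sim\bar{\examprob}_b^{-2}\Delta^{-2}\log T$, the na\"{\i}ve rate you set out to avoid; the paper's threshold is $\tilde{\Delta}_\ell<\sqrt{\examprob^\ast(k)(1-\attrprob_{\max})/K}\,\Delta$.
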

\begin{proof}
The key idea is to bound the expected $T$-step regret in any batch (\cref{lem:batch regret} in Appendix). Since the number of batches is at most $2 K$, the regret of $\mergerank$ is at most $2 K$ times larger than that of in any batch.

The regret in a batch is bounded as follows. Let all confidence intervals hold, $\rnd{I}_b$ be the positions of batch $b$, and the maximum gap in batch $b$ be
\begin{align*}
  \textstyle
  \Delta_{\max} = \max_{d \in \set{\rnd{I}_b(1), \dots, \rnd{I}_b(2) - 1}} [\attrprob(d) - \attrprob(d + 1)]\,.
\end{align*}
If the gap of item $d$ in batch $b$ is $O(K \Delta_{\max})$, its regret is dominated by the time that the batch splits, and we bound this time in \cref{lem:split} in Appendix. Otherwise, the item is likely to be eliminated before the split, and we bound this time in \cref{lem:item elimination} in Appendix. Now take the maximum of these upper bounds.
\end{proof}

\begin{figure}[t]
  \centering
  \includegraphics{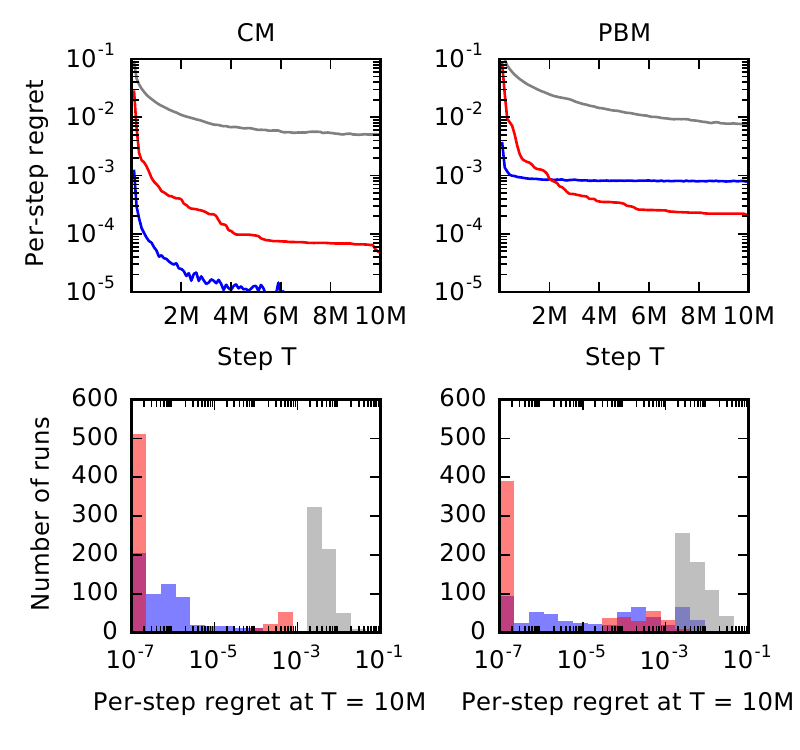}
  \vspace{-0.15in}
  \caption{The comparison of $\mergerank$ (red), $\cascadeklucb$ (blue), and $\rankedexpthree$ (gray) in the CM and PBM. In the top plots, we report the per-step regret as a function of time $T$, averaged over $60$ queries and $10$ runs per query. In the bottom plots, we show the distribution of the regret at $T = 10\text{M}$.}
  \label{fig:averages}
\end{figure}

\subsection{Discussion}
\label{sec:discussion}

Our upper bound in \cref{thm:upper bound} is logarithmic in the number of steps $T$, linear in the number of items $L$, and polynomial in the number of positions $K$. To the best of our knowledge, this is the first gap-dependent upper bound on the regret of a learning algorithm that has sublinear regret in both the CM and PBM. The gap $\Delta_{\min}$ characterizes the hardness of sorting $K + 1$ most attractive items, which is sufficient for solving our problem. In practice, the maximum attraction probability $\attrprob_{\max}$ is bounded away from $1$. Therefore, the dependence on $(1 - \attrprob_{\max})^{-1}$ is not critical. For instance, in most queries in \cref{sec:experiments}, $\attrprob_{\max} \leq 0.9$.

We believe that the cubic dependence on $K$ is not far from being optimal. In particular, consider the problem of learning the most clicked item-position pair in the PBM (\cref{sec:PBM}), which is easier than our problem. This problem can be solved as a stochastic rank-$1$ bandit \cite{katariya17stochastic} by $\rankoneelim$. Now consider the following PBM. The examination probability of the first position is close to one and the examination probabilities of all other positions are close to zero. Then the $T$-step regret of $\rankoneelim$ is $O([K^3 + K^2 L \Delta_{\min}^{-1}] \log T)$ because $\mu = O(1 / K)$, where $\mu$ is defined in \citet{katariya17stochastic}. Note that the gap-dependent term nearly matches our upper bound.

The KL confidence intervals in $\mergerank$ are necessary to achieve sample efficiency. The reason is, as we prove in \cref{lem:KL scaling} in Appendix, that
\begin{align*}
  \chi (1 - m) \kl{\alpha}{\alpha^\ast} \leq \kl{\chi \alpha}{\chi \alpha^\ast}
\end{align*}
for any $\chi, \alpha, \alpha^\ast \in [0, 1]$ and $m = \max \set{\alpha, \alpha^\ast}$. This implies that any two items with the expected rewards of $\chi \alpha$ and $\chi \alpha^\ast$ can be distinguished in $O(\chi^{-1} (\alpha^\ast - \alpha)^{-2})$ observations for any scaling factor $\chi$, when $m$ is bounded away from $1$. Suppose that $\alpha < \alpha^\ast$. Then the expected per-step regret for choosing the suboptimal item is $\chi (\alpha^\ast - \alpha)$, and the expected cumulative regret is $O((\alpha^\ast - \alpha)^{-1})$. The key observation is that the regret is independent of $\chi$. This is a major improvement over $\ucb$ confidence intervals, which only lead to $O(\chi^{-1} (\alpha^\ast - \alpha)^{-1})$ regret. Because $\chi$ can be exponentially small, such a dependence is undesirable.

The elimination of items in ${\tt UpdateBatch}$ (lines $17$--$19$) is necessary. The regret of $\mergerank$ would be quadratic in $L$ otherwise.


\section{Experiments}
\label{sec:experiments}

We experiment with the \emph{Yandex} dataset \cite{yandex}, a dataset of $35$ million (M) search sessions, each of which may contain multiple search queries. Each query is associated with displayed documents at positions $1$ to $10$ and their clicks. We select $60$ frequent search queries, and learn their CMs and PBMs using PyClick \cite{chuklin15click}, which is an open-source library of click models for web search. In each query, our goal it to rerank $L = 10$ most attractive items with the objective of maximizing the expected number of clicks at the first $K = 5$ positions. This resembles a real-world setting, where the learning agent would only be allowed to rerank highly attractive items, and not allowed to explore unattractive items \cite{zoghi16clickbased}.

$\mergerank$ is compared to two methods, $\cascadeklucb$ \cite{kveton15cascading} and $\rankedexpthree$ \cite{radlinski08learning}. $\cascadeklucb$ is an optimal algorithm for learning to rank in the cascade model. $\rankedexpthree$ is a variant of ranked bandits (\cref{sec:related work}) where the base bandit algorithm is $\expthree$ \cite{auer95gambling}. This approach is popular in practice and does not make any independence assumptions on the attractions of items.

Many solutions in our queries are near optimal, and therefore the optimal solutions are hard to learn. Therefore, we decided to evaluate the performance of algorithms by their \emph{expected per-step regret}, in up to $10\text{M}$ steps. If a solution is suboptimal and does not improve over time, its expected per-step regret remains constant and is bounded away from zero, and this can be easily observed even if the gap of the solution is small. We expect this when $\cascadeklucb$ is applied to the PBM because $\cascadeklucb$ has no guarantees in this model. The reported regret is averaged over periods of $100\text{k}$ steps to reduce randomness.

We report the performance of all compared algorithms on two CMs and one PBM in \cref{fig:queries}. The plots are chosen to represent general trends in this experiment. In the CM, $\cascadeklucb$ performs very well on most queries. This is not surprising since $\cascadeklucb$ is designed for the CM. $\mergerank$ often learns the optimal solution quickly (\cref{fig:queries}a), but sometimes this requires close to $T = 10\text{M}$ steps (\cref{fig:queries}b). In the PBM, $\cascadeklucb$ may converge to a suboptimal solution. Then its expected per-step regret remains constant and even $\rankedexpthree$ can learn a better solution over time (\cref{fig:queries}c). We also observe that $\mergerank$ outperforms $\rankedexpthree$ in all experiments.

We report the average performance of all compared algorithms in both click models in \cref{fig:averages}. These trends confirm our earlier findings. In the CM, $\cascadeklucb$ outperforms $\mergerank$; while in the PBM, $\mergerank$ outperforms $\cascadeklucb$ at $T = 2\text{M}$ steps. The regret of $\cascadeklucb$ in the PBM flattens and is bounded away from zero. This trend can be explained by the histograms in \cref{fig:averages}. They show that $\cascadeklucb$ converges to suboptimal solutions, whose regret is at least $10^{-3}$, in one sixth of its runs. The performance of $\mergerank$ is more robust and we do not observe many runs whose regret is of that magnitude.

We are delighted with the performance of $\mergerank$. Although it is not designed to be optimal (\cref{sec:discussion}), it is more robust than $\cascadeklucb$ and clearly outperforms $\rankedexpthree$. The performance of $\cascadeklucb$ is unexpectedly good. Although it does not have guarantees in the PBM, it performs very well on many queries. We plan to investigate this in our future work.


\section{Related Work}
\label{sec:related work}

A popular approach to online learning to rank are \emph{ranked bandits} \cite{radlinski08learning,slivkins13ranked}. The key idea in ranked bandits is to model each position in the recommended list as an individual bandit problem, which is then solved by a \emph{base bandit algorithm}. This algorithm is typically adversarial \cite{auer95gambling} because the distribution of clicks on lower positions is affected by higher positions. We compare to ranked bandits in \cref{sec:experiments}.

Online learning to rank in click models \cite{craswell08experimental,chuklin15click} was recently studied in several papers \cite{kveton15cascading,combes15learning,kveton15combinatorial,katariya16dcm,zong16cascading,li16contextual,lagree16multipleplay}. In all of these papers, the attraction probabilities of items are estimated from clicks and the click model. The learning agent has no guarantees beyond this model.

The problem of finding the most clicked item-position pair in the PBM, which is arguably easier than our problem of finding $K$ most clicked item-position pairs, can be solved as a stochastic rank-$1$ bandit \cite{katariya17stochastic,katariya17bernoulli}. We discuss our relation to these works in \cref{sec:discussion}.

Our problem can be also viewed as an instance of partial monitoring, where the attraction indicators of items are unobserved. General partial-monitoring algorithms \cite{agrawal89asymptotically,bartok12adaptive,bartok12partial,bartok14partial} are unsuitable for our setting because their computational complexity is polynomial in the number of actions, which is exponential in $K$.

The click model is a model of how the user interacts with a list of documents \cite{chuklin15click}, and many such models have been proposed \cite{becker07modeling,richardson07predicting,craswell08experimental,chapelle09dynamic,guo09click,guo09efficient}. Two fundamental click models are the CM \cite{craswell08experimental} and PBM \cite{richardson07predicting}. These models have been traditionally studied separately. In this work, we show that learning to rank problems in these models can be solved by the same algorithm, under reasonable assumptions.


\section{Conclusions}
\label{sec:conclusions}

We propose stochastic click bandits, a framework for online learning to rank in a broad class of click models that encompasses two most fundamental click models, the cascade and position-based models. In addition, we propose a computationally and sample efficient algorithm for solving our problems, $\mergerank$, and derive an upper bound on its $T$-step regret. Finally, we evaluate $\mergerank$ on web search queries. Our algorithm outperforms ranked bandits \cite{radlinski08learning}, a popular online learning to rank approach; and is more robust than $\cascadeklucb$ \cite{kveton15cascading}, an existing algorithm for online learning to rank in the cascade model.

The goal of this work is not to propose the optimal algorithm for our setting, but to demonstrate that online learning to rank in multiple click models is possible with theoretical guarantees. We strongly believe that the design of $\mergerank$, as well as its analysis, can be improved. For instance, $\mergerank$ resets its estimators of clicks in each batch, which is wasteful. In addition, based on the discussion in \cref{sec:discussion}, our analysis may be loose by a factor of $K$. We hope that the practically relevant setting, which is introduced in this paper, will spawn new enthusiasm in the community and lead to more work in this area.

\bibliographystyle{icml2017}
\bibliography{References}


\clearpage
\onecolumn
\appendix

\section{Notation}
\label{sec:notation}

{\renewcommand{\arraystretch}{1.2}
\begin{tabularx}{\textwidth}{|l|X|} \hline
  Symbol & Definition \\ \hline
  $\attrprob(d)$ & Attraction probability of item $d$ \\
  $\attrprob_{\max}$ & Highest attraction probability, $\attrprob(1)$ \\
  $\attrval$ & Binary attraction vector, where $\attrval(d)$ is the attraction indicator of item $d$ \\
  $P_\attrprob$ & Distribution over binary attraction vectors \\
  $\activeBatches$ & Set of active batches \\
  $b_{\max}$ & Index of the last created batch \\
  $\rnd{B}_{b, \ell}$ & Items in stage $\ell$ of batch $b$ \\
  $\rnd{c}_t(k)$ & Indicator of the click on position $k$ at time $t$ \\
  $\rnd{c}_{b, \ell}(d)$ & Number of observed clicks on item $d$ in stage $\ell$ of batch $b$ \\
  $\hat{\rnd{c}}_{b, \ell}(d)$ & Estimated probability of clicking on item $d$ in stage $\ell$ of batch $b$ \\
  $\bar{\rnd{c}}_{b, \ell}(d)$ & Probability of clicking on item $d$ in stage $\ell$ of batch $b$, $\E{\hat{\rnd{c}}_{b, \ell}(d)}$ \\
  $\cD$ & Ground set of items $[L]$ such that $\attrprob(1) \geq \ldots \geq \attrprob(L)$ \\
  $\delta_T$ & $\log T + 3 \log \log T$ \\
  $\tilde{\Delta}_\ell$ & $2^{- \ell}$ \\
  $\rnd{I}_b$ & Interval of positions in batch $b$ \\
  $K$ & Number of positions to display items \\
  $\len(b)$ & Number of positions to display items in batch $b$ \\
  $L$ & Number of items \\
  $\rnd{L}_{b, \ell}(d)$ & Lower confidence bound of item $d$, in stage $\ell$ of batch $b$ \\
  $n_\ell$ & Number of times that each item is observed in stage $\ell$ \\
  $\rnd{n}_{b, \ell}$ & Number of observations of item $d$ in stage $\ell$ of batch $b$ \\
  $\Pi_K(\cD)$ & Set of all $K$-tuples with distinct elements from $\cD$ \\
  $r(\cR, \attrval, \examval)$ & Reward of list $\cR$, for attraction and examination indicators $\attrval$ and $\examval$ \\
  $r(\cR, \attrprob, \examprob)$ & Expected reward of list $\cR$ \\
  $\cR = (d_1, \dots, d_K)$ & List of $K$ items, where $d_k$ is the $k$-th item in $\cR$ \\
  $\cR^\ast = (1, \dots, K)$ & Optimal list of $K$ items \\
  $R(\cR, \attrval, \examval)$ & Regret of list $\cR$, for attraction and examination indicators $\attrval$ and $\examval$ \\
  $R(T)$ & Expected cumulative regret in $T$ steps \\
  $T$ & Horizon of the experiment \\
  $\rnd{U}_{b, \ell}(d)$ & Upper confidence bound of item $d$, in stage $\ell$ of batch $b$ \\
  $\examprob(\cR, k)$ & Examination probability of position $k$ in list $\cR$ \\
  $\examprob^\ast(k)$ & Examination probability of position $k$ in the optimal list $\cR^\ast$ \\
  $\examval$ & Binary examination matrix, where $\examval(\cR, k)$ is the examination indicator of position $k$ in list $\cR$ \\
  $P_\examprob$ & Distribution over binary examination matrices \\ \hline
\end{tabularx}
}


\clearpage

\section{Proof of \cref{thm:upper bound}}
\label{sec:upper bound}

Let $\rnd{R}_{b, \ell}$ be the stochastic regret associated with stage $\ell$ of batch $b$. Then the expected $T$-step regret of ${\tt MergeRank}$ can be decomposed as
\begin{align*}
  R(T) \leq
  \E{\sum_{b = 1}^{2 K} \sum_{\ell = 0}^{T - 1} \rnd{R}_{b, \ell}}
\end{align*}
because the maximum number of batches is $2 K$. Let
\begin{align}
  \avgexamprob_{b, \ell}(d) = \frac{\bar{\rnd{c}}_{b, \ell}(d)}{\attrprob(d)}
  \label{eq:average examination probability}
\end{align}
be the \emph{average examination probability} of item $d$ in stage $\ell$ of batch $b$. Let
\begin{align*}
  \cE_{b, \ell} =
  \bigg\{\text{Event $1$: } & \forall d \in \rnd{B}_{b, \ell}:
  \bar{\rnd{c}}_{b, \ell}(d) \in [\rnd{L}_{b, \ell}(d), \rnd{U}_{b, \ell}(d)]\,, \\
  \text{Event $2$: } & \forall \rnd{I}_b \in [K]^2, \ d \in \rnd{B}_{b, \ell}, \ d^\ast \in \rnd{B}_{b, \ell} \cap [K]
  \text{ s.t. } \Delta = \attrprob(d^\ast) - \attrprob(d) > 0: \\
  & n_\ell \geq \frac{16 K}{\examprob^\ast(\rnd{I}_b(1)) (1 - \attrprob_{\max}) \Delta^2} \log T \implies
  \hat{\rnd{c}}_{b, \ell}(d) \leq \avgexamprob_{b, \ell}(d) [\attrprob(d) + \Delta / 4]\,, \\
  \text{Event $3$: } & \forall \rnd{I}_b \in [K]^2, \ d \in \rnd{B}_{b, \ell}, \ d^\ast \in \rnd{B}_{b, \ell} \cap [K]
  \text{ s.t. } \Delta = \attrprob(d^\ast) - \attrprob(d) > 0: \\
  & n_\ell \geq \frac{16 K}{\examprob^\ast(\rnd{I}_b(1)) (1 - \attrprob_{\max}) \Delta^2} \log T \implies
  \hat{\rnd{c}}_{b, \ell}(d^\ast) \geq \avgexamprob_{b, \ell}(d^\ast) [\attrprob(d^\ast) - \Delta / 4]\bigg\}
\end{align*}
be the ``good event'' in stage $\ell$ of batch $b$, where $\bar{\rnd{c}}_{b, \ell}(d)$ is the probability of clicking on item $d$ in stage $\ell$ of batch $b$, which is defined in \eqref{eq:click probability}; $\hat{\rnd{c}}_{b, \ell}(d)$ is its estimate, which is defined in \eqref{eq:click estimator 2}; and both $\examprob^\ast$ and $\attrprob_{\max}$ are defined in \cref{sec:regret bound}. Let $\overline{\cE_{b, \ell}}$ be the complement of event $\cE_{b, \ell}$. Let $\cE$ be the ``good event'' that all events $\cE_{b, \ell}$ happen; and $\ccE$ be its complement, the ``bad event'' that at least one event $\cE_{b, \ell}$ does not happen. Then the expected $T$-step regret can be bounded from above as
\begin{align*}
  R(T) \leq
  \E{\sum_{b = 1}^{2 K} \sum_{\ell = 0}^{T - 1} \rnd{R}_{b, \ell} \I{\cE}} + T P(\ccE) \leq
  \sum_{b = 1}^{2 K} \E{\sum_{\ell = 0}^{T - 1} \rnd{R}_{b, \ell} \I{\cE}} + 4 K L (3 e + K)\,,
\end{align*}
where the second inequality is from \cref{lem:bad events}. Now we apply \cref{lem:batch regret} to each batch $b$ and get that
\begin{align*}
  \sum_{b = 1}^{2 K} \E{\sum_{\ell = 0}^{T - 1} \rnd{R}_{b, \ell} \I{\cE}} \leq
  \frac{192 K^3 L}{(1 - \attrprob_{\max}) \Delta_{\min}} \log T\,.
\end{align*}
This concludes our proof.

\clearpage

\section{Upper Bound on the Probability of Bad Event $\ccE$}
\label{sec:bad event}

\begin{lemma}
\label{lem:bad events} Let $\ccE$ be defined as in the proof of Theorem \ref{thm:upper bound} and $T \geq 5$. Then
\begin{align*}
  P(\ccE) \leq
  \frac{4 K L (3 e + K)}{T}\,.
\end{align*}
\end{lemma}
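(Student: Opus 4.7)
The plan is to expand $\ccE$ as the union, ranging over batch indices $b \in [2K]$, stage indices $\ell$ (capped by $\lceil \log_4 T \rceil$ since $n_\ell \leq T$), and items $d \in \cD$, of the failures of Events~1, 2, and~3 defining $\cE_{b, \ell}$; then apply a union bound with each summand controlled by a Bernoulli Chernoff tail. Throughout I rely on the i.i.d.\ structure established in \cref{sec:confidence radii}: for each item $d$, the observations contributing to $\hat{\rnd{c}}_{b, \ell}(d)$ are i.i.d.\ Bernoulli with mean $\bar{\rnd{c}}_{b, \ell}(d) = \avgexamprob_{b, \ell}(d) \attrprob(d)$.

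For Event~1, the Cram\'er--Chernoff bound for Bernoulli random variables gives $P(n_\ell \kl{\hat{\rnd{c}}_{b, \ell}(d)}{\bar{\rnd{c}}_{b, \ell}(d)} \geq \delta_T) \leq 2 e^{-\delta_T}$, which with $\delta_T = \log T + 3 \log \log T$ is at most $2/(T(\log T)^3)$. Summing over the $2KL$ item--batch pairs and the at most $\lceil \log_4 T \rceil$ stages contributes $O(KL/T)$ to $P(\ccE)$; careful bookkeeping of the polynomial Chernoff prefactor $e$ (from the discrete empirical support) and the two confidence-bound sides gives the $12 e K L / T = 4 K L \cdot 3 e /T$ term in the statement.

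For Events~2 and~3, Cram\'er--Chernoff gives $P(\hat{\rnd{c}}_{b, \ell}(d) \geq \bar{\rnd{c}}_{b, \ell}(d) + \avgexamprob_{b, \ell}(d) \Delta / 4) \leq \exp(-n_\ell \kl{\bar{\rnd{c}}_{b, \ell}(d) + \avgexamprob_{b, \ell}(d) \Delta / 4}{\bar{\rnd{c}}_{b, \ell}(d)})$, and symmetrically for the lower tail of Event~3. The crux is the KL scaling lemma of \cref{sec:discussion}, $\chi (1 - m) \kl{\alpha'}{\alpha} \leq \kl{\chi \alpha'}{\chi \alpha}$, instantiated with $\chi = \avgexamprob_{b, \ell}(d)$, $\alpha = \attrprob(d)$, $\alpha' = \attrprob(d) + \Delta / 4$; combined with Pinsker's inequality $\kl{\alpha'}{\alpha} \geq 2 (\Delta/4)^2 = \Delta^2/8$, the exponent becomes at least $n_\ell \avgexamprob_{b, \ell}(d) (1 - \attrprob_{\max}) \Delta^2 / 8$. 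Using the hypothesis $n_\ell \geq 16 K /(\examprob^\ast(\rnd{I}_b(1))(1-\attrprob_{\max}) \Delta^2) \log T$ together with the uniform lower bound $\avgexamprob_{b, \ell}(d) \geq \examprob^\ast(\rnd{I}_b(1)) / K$ (which I derive from \cref{ass:optimal examination}, the uniform random placement of item $d$ across the $\len(b) \leq K$ positions in $\rnd{I}_b$ by ${\tt DisplayBatch}$, and keeping only the $k = \rnd{I}_b(1)$ term in the resulting average of $\examprob^\ast(k)$), the exponent is at least $2 \log T$, yielding $T^{-2}$ per triple. Union-bounding over $O(KL \log T)$ triples $(b, \ell, d)$ then contributes the $4 K^2 L / T$ term.

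The main obstacle will be nailing down the lower bound $\avgexamprob_{b, \ell}(d) \geq \examprob^\ast(\rnd{I}_b(1))/K$: this requires unpacking the uniform-random placement distribution induced by the permutation step in ${\tt DisplayBatch}$, using \cref{ass:order-independent examination} to argue that the examination of position $k$ is unaffected by which element of $\rnd{B}_{b, \ell}$ sits there, and then invoking \cref{ass:optimal examination,ass:decreasing examination} to compare to $\examprob^\ast$. A secondary nuisance will be getting the prefactor $3 e$ exactly right, which demands using a slightly sharper Bernoulli Chernoff bound and a stage-counting argument to absorb the $\log T$ factor from the union bound over stages into the constant.
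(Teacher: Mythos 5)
Your proposal follows essentially the same route as the paper's proof: a union bound over the at most $2K$ batches and $O(\log T)$ stages, a per-item KL/Chernoff concentration bound for Event 1, and for Events 2 and 3 the same chain consisting of the KL-scaling lemma with $\chi = \avgexamprob_{b,\ell}(d)$, the lower bound $\avgexamprob_{b,\ell}(d) \geq \examprob^\ast(\rnd{I}_b(1))/K$ from the randomized placement in ${\tt DisplayBatch}$ and \cref{ass:optimal examination}, and Pinsker's inequality to force the exponent up to $2\log T$ and hence a $T^{-2}$ tail per $(d, d^\ast)$ pair. The only deviation is in Event 1, where you invoke a pointwise fixed-$n_\ell$ Chernoff bound of order $2e^{-\delta_T}$ instead of the paper's self-normalized Theorem 10 of Garivier and Capp\'e; since the confidence bounds are only evaluated at the deterministic sample size $n_\ell$ at the end of a stage, this is legitimate and in fact gives a slightly smaller Event-1 contribution, so the stated bound still follows.
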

\begin{proof}
By the union bound,
\begin{align*}
  P(\ccE) \leq
  \sum_{b = 1}^{2 K} \sum_{\ell = 0}^{T - 1} P(\overline{\cE_{b, \ell}})\,.
\end{align*}
Now we bound the probability of each event in $\overline{\cE_{b, \ell}}$ and then sum them up.

\vspace{0.1in}

\noindent \textbf{Event $1$}

\noindent The probability that event $1$ in $\cE_{b, \ell}$ does not happen is bounded as follows. Fix $\rnd{I}_b$ and $\rnd{B}_{b, \ell}$. For any $d \in \rnd{B}_{b, \ell}$,
\begin{align*}
  P(\bar{\rnd{c}}_{b, \ell}(d) \notin [\rnd{L}_{b, \ell}(d), \rnd{U}_{b, \ell}(d)])
  & \leq P(\bar{\rnd{c}}_{b, \ell}(d) < \rnd{L}_{b, \ell}(d)) + P(\bar{\rnd{c}}_{b, \ell}(d) > \rnd{U}_{b, \ell}(d)) \\
  & \leq \frac{2 e \ceils{\log(T \log^3 T) \log n_\ell}}{T \log^3 T} \\
  & \leq \frac{2 e \ceils{\log^2 T + \log(\log^3 T) \log T}}{T \log^3 T} \\
  & \leq \frac{2 e \ceils{2 \log^2 T}}{T \log^3 T} \\
  & \leq \frac{6 e}{T \log T}\,,
\end{align*}
where the second inequality is by Theorem 10 of \citet{garivier11klucb}, the third inequality is from $T \geq n_\ell$, the fourth inequality is from $\log(\log^3 T) \leq \log T$ for $T \geq 5$, and the last inequality is from $\ceils{2 \log^2 T} \leq 3 \log^2 T$ for $T \geq 3$. By the union bound,
\begin{align*}
  P(\exists d \in \rnd{B}_{b, \ell} \text{ s.t. } \bar{\rnd{c}}_{b, \ell}(d) \notin [\rnd{L}_{b, \ell}(d), \rnd{U}_{b, \ell}(d)]) \leq
  \frac{6 e L}{T \log T}
\end{align*}
for any $\rnd{B}_{b, \ell}$. Finally, since the above inequality holds for any $\rnd{B}_{b, \ell}$, the probability that event $1$ in $\cE_{b, \ell}$ does not happen is bounded as above.

\vspace{0.1in}

\noindent \textbf{Event $2$}

\noindent The probability that event $2$ in $\cE_{b, \ell}$ does not happen is bounded as follows. Fix $\rnd{I}_b$ and $\rnd{B}_{b, \ell}$, and let $k = \rnd{I}_b(1)$. If the event does not happen for items $d$ and $d^\ast$, then it must be true that
\begin{align*}
  n_\ell \geq \frac{16 K}{\examprob^\ast(k) (1 - \attrprob_{\max}) \Delta^2} \log T\,, \quad
  \hat{\rnd{c}}_{b, \ell}(d) > \avgexamprob_{b, \ell}(d) [\attrprob(d) + \Delta / 4]\,.
\end{align*}
From the definition of the average examination probability in \eqref{eq:average examination probability} and a variant of Hoeffding's inequality in \cref{lem:KL Hoeffding}, we have that
\begin{align*}
  P\left(\hat{\rnd{c}}_{b, \ell}(d) > \avgexamprob_{b, \ell}(d) [\attrprob(d) + \Delta / 4]\right) \leq
  \exp\left[- n_\ell \kl{\avgexamprob_{b, \ell}(d) [\attrprob(d) + \Delta / 4]}{\bar{\rnd{c}}_{b, \ell}(d)}\right]\,.
\end{align*}
From \cref{lem:KL scaling}, $\avgexamprob_{b, \ell}(d) \geq \examprob^\ast(k) / K$ (\cref{lem:examination}), and Pinsker's inequality, we have that
\begin{align*}
  \exp\left[- n_\ell \kl{\avgexamprob_{b, \ell}(d) [\attrprob(d) + \Delta / 4]}{\bar{\rnd{c}}_{b, \ell}(d)}\right]
  & \leq \exp\left[- n_\ell \avgexamprob_{b, \ell}(d) (1 - \attrprob_{\max})
  \kl{\attrprob(d) + \Delta / 4}{\attrprob(d)}\right] \\
  & \leq \exp\left[- n_\ell \frac{\examprob^\ast(k) (1 - \attrprob_{\max}) \Delta^2}{8 K}\right]\,.
\end{align*}
From our assumption on $n_\ell$, we conclude that
\begin{align*}
  \exp\left[- n_\ell \frac{\examprob^\ast(k) (1 - \attrprob_{\max}) \Delta^2}{8 K}\right] \leq
  \exp[- 2 \log T] =
  \frac{1}{T^2}\,.
\end{align*}
Finally, we chain all above inequalities and get that event $2$ in $\cE_{b, \ell}$ does not happen for any fixed $\rnd{I}_b$, $\rnd{B}_{b, \ell}$, $d$, and $d^\ast$ with probability of at most $T^{-2}$. Since the maximum numbers of items $d$ and $d^\ast$ are $L$ and $K$, respectively, the event does not happen for any fixed $\rnd{I}_b$ and $\rnd{B}_{b, \ell}$ with probability of at most $K L T^{- 2}$. In turn, the probability that event $2$ in $\cE_{b, \ell}$ does not happen is bounded by $K L T^{- 2}$.

\vspace{0.1in}

\noindent \textbf{Event $3$}

This bound is analogous to that of event $2$.

\vspace{0.1in}

\noindent \textbf{Total probability}

\noindent The maximum number of stages in any batch in $\mergerank$ is $\log T$ and the maximum number of batches is $2 K$. Hence, by the union bound,
\begin{align*}
  P(\ccE) \leq
  \left(\frac{6 e L}{T \log T} + \frac{K L}{T^2} + \frac{K L}{T^2}\right) (2 K \log T) \leq
  \frac{4 K L (3 e + K)}{T}\,.
\end{align*}
This concludes our proof.
\end{proof}

\clearpage

\section{Upper Bound on the Regret in Individual Batches}
\label{sec:batch regret}

\begin{lemma}
\label{lem:examination} For any batch $b$, positions $\rnd{I}_b$, stage $\ell$, set $\rnd{B}_{b, \ell}$, and item $d \in \rnd{B}_{b, \ell}$,
\begin{align*}
  \frac{\examprob^\ast(k)}{K} \leq
  \avgexamprob_{b, \ell}(d)\,,
\end{align*}
where $k = \rnd{I}_b(1)$ is the highest position in batch $b$.
\end{lemma}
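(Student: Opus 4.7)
The plan is to start from the closed form \eqref{eq:click probability 2} for $\bar{\rnd{c}}_{b,\ell}(d)$, plug it into the definition $\avgexamprob_{b,\ell}(d) = \bar{\rnd{c}}_{b,\ell}(d)/\attrprob(d)$, and then drive the bound down in three steps: pointwise replacement of examination probabilities by their values under $\cR^\ast$, a symmetry argument that converts the uniform average over $\mathcal{S}_d$ into a uniform average over positions, and a final single-term drop.

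First, because for each $\cR \in \mathcal{S}_d$ the item $d$ appears at exactly one position in the range $[\rnd{I}_b(1), \rnd{I}_b(2)]$, the inner indicator sum in \eqref{eq:click probability 2} selects one term. Writing $\mathrm{pos}(d, \cR)$ for that unique position,
\begin{align*}
\avgexamprob_{b,\ell}(d) = \frac{1}{|\mathcal{S}_d|} \sum_{\cR \in \mathcal{S}_d} \examprob(\cR, \mathrm{pos}(d, \cR)).
\end{align*}
By \cref{ass:optimal examination}, each term satisfies $\examprob(\cR, \mathrm{pos}(d, \cR)) \geq \examprob^\ast(\mathrm{pos}(d, \cR))$, which gives the lower bound
\begin{align*}
\avgexamprob_{b,\ell}(d) \geq \frac{1}{|\mathcal{S}_d|} \sum_{\cR \in \mathcal{S}_d} \examprob^\ast(\mathrm{pos}(d, \cR)).
\end{align*}

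Second, I would exploit the symmetry of $\mathcal{S}_d$. For any two positions $k_1, k_2 \in [\rnd{I}_b(1), \rnd{I}_b(2)]$, the map that swaps the items occupying positions $k_1$ and $k_2$ is a bijection on $\mathcal{S}_d$ which sends lists with $d$ at $k_1$ to lists with $d$ at $k_2$. Hence the number of $\cR \in \mathcal{S}_d$ with $\mathrm{pos}(d, \cR) = k'$ is the same for each $k' \in [\rnd{I}_b(1), \rnd{I}_b(2)]$, namely $|\mathcal{S}_d|/\len(b)$. The average above therefore collapses to
\begin{align*}
\frac{1}{|\mathcal{S}_d|} \sum_{\cR \in \mathcal{S}_d} \examprob^\ast(\mathrm{pos}(d, \cR)) = \frac{1}{\len(b)} \sum_{k' = \rnd{I}_b(1)}^{\rnd{I}_b(2)} \examprob^\ast(k').
\end{align*}

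Finally, since all $\examprob^\ast(k')$ are nonnegative, dropping every term except $k' = \rnd{I}_b(1) = k$ yields $\frac{\examprob^\ast(k)}{\len(b)}$, and then $\len(b) \leq K$ gives the stated bound $\examprob^\ast(k)/K$. The only non-routine step is the symmetry bijection; the rest is bookkeeping against the assumptions already listed in \cref{sec:additional assumptions} and the definition \eqref{eq:click probability 2}.
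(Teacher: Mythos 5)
Your proof is correct and follows essentially the same route as the paper's: the paper's two observations (that $\examprob^\ast(k)$ is the minimal examination probability of position $k$ by \cref{ass:optimal examination}, and that the randomization in ${\tt DisplayBatch}$ places $d$ at position $k$ a $1/\len(b) \geq 1/K$ fraction of the time) are exactly your pointwise bound and your symmetry bijection on $\mathcal{S}_d$, just spelled out through \eqref{eq:click probability 2}. Your version is a more explicit elaboration of the paper's two-sentence sketch, with no substantive difference in approach.
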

\begin{proof}
The proof follows from two observations. First, by \cref{ass:optimal examination}, $\examprob^\ast(k)$ is the lowest examination probability of position $k$. Second, by the design of ${\tt DisplayBatch}$, item $d$ is placed at position $k$ with probability of at least $1 / K$.
\end{proof}

\begin{lemma}
\label{lem:separation} Let event $\cE$ happen and $T \geq 5$. For any batch $b$, positions $\rnd{I}_b$, set $\rnd{B}_{b, 0}$, item $d \in \rnd{B}_{b, 0}$, and item $d^\ast \in \rnd{B}_{b, 0} \cap \allowbreak [K]$ such that $\Delta = \allowbreak \attrprob(d^\ast) - \attrprob(d) > 0$, let $k = \rnd{I}_b(1)$ be the highest position in batch $b$ and $\ell$ be the first stage where
\begin{align*}
  \tilde{\Delta}_{\ell} < \sqrt{\frac{\examprob^\ast(k) (1 - \attrprob_{\max})}{K}} \Delta\,.
\end{align*}
Then $\rnd{U}_{b, \ell}(d) < \rnd{L}_{b, \ell}(d^\ast)$.
\end{lemma}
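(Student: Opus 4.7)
My plan is to combine events 2 and 3 of $\cE_{b,\ell}$ with the KL scaling lemma from \cref{sec:discussion} to separate the KL confidence intervals around the two item estimates. The hypothesis on $\tilde{\Delta}_\ell$ is calibrated so that the sample size $n_\ell$ just exceeds the threshold required by events 2 and 3, and the KL scaling lemma is then what converts the slack between $\hat{\rnd{c}}_{b,\ell}$ and $\avgexamprob_{b,\ell}\attrprob$ into a KL-distance bound on the scaled confidence radii.

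First I would unpack the defining inequality on $\tilde{\Delta}_\ell$. Squaring it rearranges to $K/[\examprob^\ast(k)(1-\attrprob_{\max})\Delta^2] < \tilde{\Delta}_\ell^{-2}$, so
\begin{align*}
n_\ell \ \geq\ 16\tilde{\Delta}_\ell^{-2}\log T \ >\ \frac{16K}{\examprob^\ast(k)(1-\attrprob_{\max})\Delta^2}\log T,
\end{align*}
which is exactly the sample-size condition in events 2 and 3 of $\cE_{b,\ell}$. Therefore, under $\cE$ we may use
\begin{align*}
\hat{\rnd{c}}_{b,\ell}(d) \leq \avgexamprob_{b,\ell}(d)[\attrprob(d)+\Delta/4], \qquad \hat{\rnd{c}}_{b,\ell}(d^\ast) \geq \avgexamprob_{b,\ell}(d^\ast)[\attrprob(d^\ast)-\Delta/4].
\end{align*}

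Next I would argue $\rnd{U}_{b,\ell}(d) < q_d := \avgexamprob_{b,\ell}(d)[\attrprob(d)+\Delta/2]$ by contradiction. If instead $\rnd{U}_{b,\ell}(d) \geq q_d$, then since $\hat{\rnd{c}}_{b,\ell}(d) \leq \avgexamprob_{b,\ell}(d)[\attrprob(d)+\Delta/4] < q_d$, monotonicity of $\kl{p}{q}$ in each argument away from the other gives
\begin{align*}
\kl{\hat{\rnd{c}}_{b,\ell}(d)}{\rnd{U}_{b,\ell}(d)} \ \geq\ \kl{\avgexamprob_{b,\ell}(d)[\attrprob(d)+\Delta/4]}{\avgexamprob_{b,\ell}(d)[\attrprob(d)+\Delta/2]}.
\end{align*}
Applying the KL scaling lemma with $m = \attrprob(d)+\Delta/2 \leq \attrprob(d^\ast) \leq \attrprob_{\max}$, then Pinsker's inequality, and finally \cref{lem:examination}, the right-hand side is at least $\examprob^\ast(k)(1-\attrprob_{\max})\Delta^2/(8K)$. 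Multiplying by $n_\ell$ and using step~1 gives strictly more than $2\log T$, which exceeds $\delta_T = \log T + 3\log\log T$ for $T \geq 5$, contradicting the definition of $\rnd{U}_{b,\ell}(d)$. A symmetric argument yields $\rnd{L}_{b,\ell}(d^\ast) > \avgexamprob_{b,\ell}(d^\ast)[\attrprob(d^\ast)-\Delta/2]$.

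To conclude, I would invoke the correct examination scaling \eqref{eq:correct examination scaling}, which gives $\avgexamprob_{b,\ell}(d^\ast) \geq \avgexamprob_{b,\ell}(d)$. Since $\attrprob(d^\ast)-\Delta/2 = \attrprob(d)+\Delta/2 > 0$,
\begin{align*}
\rnd{L}_{b,\ell}(d^\ast) \ >\ \avgexamprob_{b,\ell}(d^\ast)[\attrprob(d^\ast)-\Delta/2] \ \geq\ \avgexamprob_{b,\ell}(d)[\attrprob(d)+\Delta/2] \ >\ \rnd{U}_{b,\ell}(d),
\end{align*}
which is the desired separation. The delicate step is the KL-to-attraction conversion in the previous paragraph: a plain Pinsker bound on $\kl{\chi\alpha}{\chi\alpha^\ast}$ would contribute a factor of $(\examprob^\ast(k)/K)^2$ rather than $\examprob^\ast(k)/K$, inflating the required $n_\ell$ by a factor of $K$ and breaking the cubic-in-$K$ regret bound. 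The KL scaling lemma is precisely what keeps this factor linear.
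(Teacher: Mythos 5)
Your proposal is correct and follows essentially the same route as the paper's proof: the same derivation of the sample-size condition from the hypothesis on $\tilde{\Delta}_\ell$, the same contradiction arguments against the definitions of $\rnd{U}_{b,\ell}(d)$ and $\rnd{L}_{b,\ell}(d^\ast)$ via events 2 and 3, \cref{lem:KL scaling}, \cref{lem:examination}, and Pinsker's inequality, and the same final chaining through \eqref{eq:correct examination scaling} (your last display is a slightly cleaner rearrangement of the paper's, exploiting $\attrprob(d^\ast) - \Delta/2 = \attrprob(d) + \Delta/2$). Your closing remark about why plain Pinsker scaling would lose a factor of $K$ matches the paper's own discussion in \cref{sec:discussion}.
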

\begin{proof}
From the definition of $n_\ell$ in $\mergerank$ and our assumption on $\tilde{\Delta}_{\ell}$,
\begin{align}
  n_\ell \geq
  \frac{16}{\tilde{\Delta}_{\ell}^2} \log T >
  \frac{16 K}{\examprob^\ast(k) (1 - \attrprob_{\max}) \Delta^2} \log T\,.
  \label{eq:contradiction}
\end{align}
Let $\mu = \avgexamprob_{b, \ell}(d)$ and suppose that $\rnd{U}_{b, \ell}(d) \geq \mu [\attrprob(d) + \Delta / 2]$ holds. Then from this assumption, the definition of $\rnd{U}_{b, \ell}(d)$, and event $2$ in $\cE_{b, \ell}$,
\begin{align*}
  \kl{\hat{\rnd{c}}_{b, \ell}(d)}{\rnd{U}_{b, \ell}(d)}
  & \geq \kl{\hat{\rnd{c}}_{b, \ell}(d)}{\mu [\attrprob(d) + \Delta / 2]}
  \I{\hat{\rnd{c}}_{b, \ell}(d) \leq \mu [\attrprob(d) + \Delta / 2]} \\
  & \geq \kl{\mu [\attrprob(d) + \Delta / 4]}{\mu [\attrprob(d) + \Delta / 2]}\,.
\end{align*}
From \cref{lem:KL scaling}, $\mu \geq \examprob^\ast(k) / K$ (\cref{lem:examination}), and Pinsker's inequality, we have that
\begin{align*}
  \kl{\mu [\attrprob(d) + \Delta / 4]}{\mu [\attrprob(d) + \Delta / 2]}
  & \geq \mu (1 - \attrprob_{\max}) \kl{\attrprob(d) + \Delta / 4}{\attrprob(d) + \Delta / 2} \\
  & \geq \frac{\examprob^\ast(k) (1 - \attrprob_{\max}) \Delta^2}{8 K}\,.
\end{align*}
From the definition of $\rnd{U}_{b, \ell}(d)$, $T \geq 5$, and above inequalities,
\begin{align*}
  n_\ell =
  \frac{\log T + 3 \log \log T}{\kl{\hat{\rnd{c}}_{b, \ell}(d)}{\rnd{U}_{b, \ell}(d)}} \leq
  \frac{2 \log T}{\kl{\hat{\rnd{c}}_{b, \ell}(d)}{\rnd{U}_{b, \ell}(d)}} \leq
  \frac{16 K \log T}{\examprob^\ast(k) (1 - \attrprob_{\max}) \Delta^2}\,.
\end{align*}
This contradicts to \eqref{eq:contradiction}, and therefore it must be true that $\rnd{U}_{b, \ell}(d) < \mu [\attrprob(d) + \Delta / 2]$ holds.

On the other hand, let $\mu^\ast = \avgexamprob_{b, \ell}(d^\ast)$ and suppose that $\rnd{L}_{b, \ell}(d^\ast) \leq \mu^\ast [\attrprob(d^\ast) - \Delta / 2]$ holds. Then from this assumption, the definition of $\rnd{L}_{b, \ell}(d^\ast)$, and event $3$ in $\cE_{b, \ell}$,
\begin{align*}
  \kl{\hat{\rnd{c}}_{b, \ell}(d^\ast)}{\rnd{L}_{b, \ell}(d^\ast)}
  & \geq \kl{\hat{\rnd{c}}_{b, \ell}(d^\ast)}{\mu^\ast [\attrprob(d^\ast) - \Delta / 2]}
  \I{\hat{\rnd{c}}_{b, \ell}(d^\ast) \geq \mu^\ast [\attrprob(d^\ast) - \Delta / 2]} \\
  & \geq \kl{\mu^\ast [\attrprob(d^\ast) - \Delta / 4]}{\mu^\ast [\attrprob(d^\ast) - \Delta / 2]}\,.
\end{align*}
From \cref{lem:KL scaling}, $\mu^\ast \geq \examprob^\ast(k) / K$ (\cref{lem:examination}), and Pinsker's inequality, we have that
\begin{align*}
  \kl{\mu^\ast [\attrprob(d^\ast) - \Delta / 4]}{\mu^\ast [\attrprob(d^\ast) - \Delta / 2]}
  & \geq \mu^\ast (1 - \attrprob_{\max}) \kl{\attrprob(d^\ast) - \Delta / 4}{\attrprob(d^\ast) - \Delta / 2} \\
  & \geq \frac{\examprob^\ast(k) (1 - \attrprob_{\max}) \Delta^2}{8 K}\,.
\end{align*}
From the definition of $\rnd{L}_{b, \ell}(d^\ast)$, $T \geq 5$, and above inequalities,
\begin{align*}
  n_\ell =
  \frac{\log T + 3 \log \log T}{\kl{\hat{\rnd{c}}_{b, \ell}(d)}{\rnd{L}_{b, \ell}(d^\ast)}} \leq
  \frac{2 \log T}{\kl{\hat{\rnd{c}}_{b, \ell}(d^\ast)}{\rnd{L}_{b, \ell}(d^\ast)}} \leq
  \frac{16 K \log T}{\examprob^\ast(k) (1 - \attrprob_{\max}) \Delta^2}\,.
\end{align*}
This contradicts to \eqref{eq:contradiction}, and therefore it must be true that $\rnd{L}_{b, \ell}(d^\ast) > \mu^\ast [\attrprob(d^\ast) - \Delta / 2]$ holds.

Finally, based on inequality \eqref{eq:correct examination scaling},
\begin{align*}
  \mu^\ast =
  \frac{\bar{\rnd{c}}_{b, \ell}(d^\ast)}{\attrprob(d^\ast)} \geq
  \frac{\bar{\rnd{c}}_{b, \ell}(d)}{\attrprob(d)} =
  \mu\,,
\end{align*}
and item $d$ is guaranteed to be eliminated by the end of stage $\ell$ because
\begin{align*}
  \rnd{U}_{b, \ell}(d)
  & < \mu [\attrprob(d) + \Delta / 2] \\
  & \leq \mu \attrprob(d) + \frac{\mu^\ast \attrprob(d^\ast) - \mu \attrprob(d)}{2} \\
  & = \mu^\ast \attrprob(d^\ast) - \frac{\mu^\ast \attrprob(d^\ast) - \mu \attrprob(d)}{2} \\
  & \leq \mu^\ast [\attrprob(d^\ast) - \Delta / 2] \\
  & < \rnd{L}_{b, \ell}(d^\ast)\,.
\end{align*}
This concludes our proof.
\end{proof}

\begin{lemma}
\label{lem:item elimination} Let event $\cE$ happen and $T \geq 5$. For any batch $b$, positions $\rnd{I}_b$ where $\rnd{I}_b(2) = K$, set $\rnd{B}_{b, 0}$, and item $d \in \rnd{B}_{b, 0}$ such that $d > K$, let $k = \rnd{I}_b(1)$ be the highest position in batch $b$ and $\ell$ be the first stage where
\begin{align*}
  \tilde{\Delta}_{\ell} < \sqrt{\frac{\examprob^\ast(k) (1 - \attrprob_{\max})}{K}} \Delta
\end{align*}
for $\Delta = \attrprob(K) - \attrprob(d)$. Then item $d$ is eliminated by the end of stage $\ell$.
\end{lemma}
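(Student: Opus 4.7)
The plan is to reduce the claim to \cref{lem:separation} applied with $d^\ast$ ranging over the $\len(b)=K-\rnd{I}_b(1)+1$ ``good'' items $\{\rnd{I}_b(1),\dots,K\}$. Under event $\cE$ and the algorithm invariants stated at the end of \cref{sec:algorithm}, every one of these items belongs to $\rnd{B}_{b,0}$: the batch $b$, whose positional range is $(\rnd{I}_b(1),K)$, must contain every item whose true rank lies in $[\rnd{I}_b(1),K]$, for otherwise such an item would have been promoted to a higher batch by an earlier split, forcing the lower confidence bound of a truly less-attractive item to exceed the upper confidence bound of a top-$K$ item---something ruled out under $\cE$ by the separation guarantee of \cref{lem:separation} together with the correct examination scaling of \cref{sec:correct examination scaling}.

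For each $d^\ast \in \{\rnd{I}_b(1),\dots,K\}$, since $d>K$ we have $\attrprob(d^\ast)-\attrprob(d) \geq \attrprob(K)-\attrprob(d) = \Delta$, so the condition
\[
\tilde\Delta_\ell < \sqrt{\frac{\examprob^\ast(k)(1-\attrprob_{\max})}{K}}\,[\attrprob(d^\ast)-\attrprob(d)]
\]
required by \cref{lem:separation} is already satisfied at stage $\ell$, because the same inequality holds with the smaller $\Delta$ on the right by the definition of $\ell$. Applying \cref{lem:separation} to each pair $(d,d^\ast)$ yields $\rnd{U}_{b,\ell}(d) < \rnd{L}_{b,\ell}(d^\ast)$ for every $d^\ast \in \{\rnd{I}_b(1),\dots,K\}$.

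Thus at least $\len(b)$ items in $\rnd{B}_{b,\ell}$ have a lower confidence bound strictly greater than $\rnd{U}_{b,\ell}(d)$, which forces the $\len(b)$-th largest lower confidence bound $\rnd{L}_{b,\ell}(d_{\len(b)})$ in the ordering used by ${\tt UpdateBatch}$ to strictly exceed $\rnd{U}_{b,\ell}(d)$ as well. The elimination step (lines~17--19 of ${\tt UpdateBatch}$) then removes $d$ from $\rnd{B}_{b,\ell+1}$, proving the claim.

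The main obstacle is the bookkeeping underlying the first paragraph: one must confirm that none of the good items $\{\rnd{I}_b(1),\dots,K\}$ was prematurely eliminated, and that if some split occurred at an earlier stage $\ell'<\ell$, item $d$ ended up in the sub-batch still containing all of these good items. Both facts follow from \cref{lem:separation} applied in the reverse direction---under $\cE$, a top-$K$ item's lower confidence bound cannot fall below the upper confidence bound of any strictly less-attractive item once the associated separation stage is reached, so the splits and eliminations executed before stage $\ell$ cannot disturb the relative placement of $\{\rnd{I}_b(1),\dots,K\}$ and $d$. One then simply replaces $b$ by whichever sub-batch contains $d$ at stage $\ell$ and reruns the argument verbatim.
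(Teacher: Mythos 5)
Your proof is correct and follows essentially the same route as the paper's: take $B^+ = \set{k, \dots, K}$, note $\attrprob(d^\ast) - \attrprob(d) \geq \Delta$ for every $d^\ast \in B^+$, and invoke \cref{lem:separation} to get $\rnd{U}_{b,\ell}(d) < \rnd{L}_{b,\ell}(d^\ast)$ for all $\len(b)$ such items, which triggers the elimination in lines 17--19 of ${\tt UpdateBatch}$. The extra bookkeeping you supply (that all of $\set{k,\dots,K}$ survive in $d$'s batch because splits and eliminations are correct under $\cE$) is left implicit in the paper's one-line proof but is consistent with how the paper handles it elsewhere (e.g., in \cref{lem:batch regret}).
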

\begin{proof}
Let $B^+ = \set{k, \dots, K}$. Now note that $\attrprob(d^\ast) - \attrprob(d) \geq \Delta$ for any $d^\ast \in B^+$. By \cref{lem:separation}, $\rnd{L}_{b, \ell}(d^\ast) > \rnd{U}_{b, \ell}(d)$ for any $d^\ast \in B^+$; and therefore item $d$ is eliminated by the end of stage $\ell$.
\end{proof}

\begin{lemma}
\label{lem:split} Let $\cE$ happen and $T \geq 5$. For any batch $b$, positions $\rnd{I}_b$, and set $\rnd{B}_{b, 0}$, let $k = \rnd{I}_b(1)$ be the highest position in batch $b$ and $\ell$ be the first stage where
\begin{align*}
  \tilde{\Delta}_{\ell} < \sqrt{\frac{\examprob^\ast(k) (1 - \attrprob_{\max})}{K}} \Delta_{\max}
\end{align*}
for $\Delta_{\max} = \attrprob(s) - \attrprob(s + 1)$ and $s = \argmax\limits_{d \in \set{\rnd{I}_b(1), \dots, \rnd{I}_b(2) - 1}} [\attrprob(d) - \attrprob(d + 1)]$. Then batch $b$ is split by the end of stage $\ell$.
\end{lemma}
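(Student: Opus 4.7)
The plan is to show that under event $\cE$, at stage $\ell$ the Kullback--Leibler confidence intervals separate the items of $\rnd{B}_{b, \ell}$ whose attraction rank is at most $s$ from those whose rank is at least $s + 1$, and that this separation triggers the split test inside ${\tt UpdateBatch}$. If the batch has already split at some earlier stage there is nothing to prove, so I would assume that batch $b$ is still active at the start of stage $\ell$ and argue that a valid split is identified before that stage ends.

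First, I would apply \cref{lem:separation} with $\Delta = \Delta_{\max}$: for any $d^\ast, d \in \rnd{B}_{b, \ell}$ with $\attrprob(d^\ast) \geq \attrprob(s)$ and $\attrprob(d) \leq \attrprob(s + 1)$, the gap $\attrprob(d^\ast) - \attrprob(d) \geq \Delta_{\max}$ and $d^\ast$ has attraction rank at most $s \leq \rnd{I}_b(2) - 1 \leq K$, so $d^\ast \in [K]$ and the lemma yields $\rnd{U}_{b, \ell}(d) < \rnd{L}_{b, \ell}(d^\ast)$. Next, I would establish by induction on the history of prior splits and elimination stages under event $\cE$ that every item of attraction rank in $\set{\rnd{I}_b(1), \dots, \rnd{I}_b(2)}$ still lies in $\rnd{B}_{b, \ell}$; in particular items $s$ and $s + 1$ are both present. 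Then $S^+ = \set{d \in \rnd{B}_{b, \ell} : \attrprob(d) \geq \attrprob(s)}$ and $S^- = \rnd{B}_{b, \ell} \setminus S^+$ are both nonempty, and when $\rnd{B}_{b, \ell}$ is listed in descending order of $\rnd{L}_{b, \ell}$ every element of $S^+$ precedes every element of $S^-$. Taking $k = \abs{S^+}$ gives $B^+_k = S^+$, $B^-_k = S^-$, and the split inequality $\rnd{L}_{b, \ell}(d_k) > \max_{d \in B^-_k} \rnd{U}_{b, \ell}(d)$ is immediate from the separation above; since the algorithm retains the largest qualifying split index, it will split batch $b$ before stage $\ell$ ends.

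I expect the main obstacle to be the induction step above: ruling out that any item of attraction rank in $\set{\rnd{I}_b(1), \dots, \rnd{I}_b(2)}$ has been eliminated at an earlier stage of batch $b$ or misplaced by a prior split. To deal with this I would combine \cref{lem:separation} applied at each earlier split (which, under $\cE$, guarantees that more attractive items are routed into the higher-position child batch) with the validity of the KL confidence intervals in $\cE$ and the correct examination scaling \eqref{eq:correct examination scaling}, from which one should be able to argue that the $\len(b)$-th largest $\rnd{L}_{b, \ell'}$ is always achieved by an item whose attraction rank lies within $\set{\rnd{I}_b(1), \dots, \rnd{I}_b(2)}$, so the elimination threshold never discards an item that ought to remain in the batch.
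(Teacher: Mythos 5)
Your proof takes essentially the same route as the paper's: apply \cref{lem:separation} to every pair $(d^\ast, d)$ straddling the largest consecutive gap at rank $s$ (each such pair has gap at least $\Delta_{\max}$), and conclude that the resulting separation $\rnd{L}_{b,\ell}(d^\ast) > \rnd{U}_{b,\ell}(d)$ forces the split test in ${\tt UpdateBatch}$ to fire by the end of stage $\ell$. The extra bookkeeping you flag --- that prior splits and eliminations under $\cE$ leave all items of rank in $\set{\rnd{I}_b(1), \dots, \rnd{I}_b(2)}$ inside $\rnd{B}_{b,\ell}$, and that sorting by $\rnd{L}_{b,\ell}$ places all of $S^+$ ahead of $S^-$ --- is left implicit in the paper's three-line proof, so your elaboration is consistent with, rather than divergent from, the intended argument.
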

\begin{proof}
Let $B^+ = \set{k, \dots, s}$ and $B^- = \rnd{B}_{b, 0} \setminus B^+$. Now note that $\attrprob(d^\ast) - \attrprob(d) \geq \Delta_{\max}$ for any $(d^\ast, d) \in B^+ \times B^-$. By \cref{lem:separation}, $\rnd{L}_{b, \ell}(d^\ast) > \rnd{U}_{b, \ell}(d)$ for any $(d^\ast, d) \in B^+ \times B^-$; and therefore batch $b$ is split by the end of stage $\ell$.
\end{proof}

\begin{lemma}
\label{lem:batch regret} Let event $\cE$ happen and $T \geq 5$. Then the expected $T$-step regret in any batch $b$ is bounded as
\begin{align*}
  \E{\sum_{\ell = 0}^{T - 1} \rnd{R}_{b, \ell}} \leq
  \frac{96 K^2 L}{(1 - \attrprob_{\max}) \Delta_{\max}} \log T\,.
\end{align*}
\end{lemma}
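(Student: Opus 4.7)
The plan is to bound the batch-$b$ regret by a per-item analysis that chooses for each $d\in\rnd{B}_{b,0}$ the tighter of two caps on the number of observations of $d$: the split cap from \cref{lem:split} and the elimination cap from \cref{lem:item elimination}. Let $k=\rnd{I}_b(1)$ and let $\ell^\ast$ be the first stage with $\tilde{\Delta}_{\ell^\ast}<\sqrt{\examprob^\ast(k)(1-\attrprob_{\max})/K}\,\Delta_{\max}$. Under event $\cE$, \cref{lem:split} gives that batch $b$ is split by the end of stage $\ell^\ast$, so $\rnd{R}_{b,\ell}=0$ for $\ell>\ell^\ast$; because $n_{\ell+1}=4\,n_\ell$, the total number of observations of any surviving item across stages $0,\dots,\ell^\ast$ is at most $(4/3)\,n_{\ell^\ast}=O(K\log T/(\examprob^\ast(k)(1-\attrprob_{\max})\Delta_{\max}^2))$.

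Next, I would decompose the instantaneous regret position by position. Using \cref{ass:optimal examination} (which gives $\examprob(\cR_t,k')\ge\examprob^\ast(k')$) and \cref{ass:decreasing examination} (which gives $\examprob^\ast(k')\le\examprob^\ast(k)$ for all $k'\in\rnd{I}_b$), the regret contribution of batch $b$ at time $t$ is upper-bounded by $\examprob^\ast(k)\sum_{k'\in\rnd{I}_b}[\attrprob(k')-\attrprob(d^t_{k'})]^{+}$. Aggregating over time and using that the uniform placement in \cref{alg:display batch} visits each position in $\rnd{I}_b$ equally often, the contribution of an item $d$ to $\sum_{\ell\le\ell^\ast}\rnd{R}_{b,\ell}$ is at most $n_d\cdot\examprob^\ast(k)\cdot\bar\Delta_d$, where $n_d$ is the number of pulls of $d$ and $\bar\Delta_d=[\attrprob(\rnd{I}_b(1))-\attrprob(d)]^{+}$.

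I then split items in $\rnd{B}_{b,0}$ into cases and combine the two caps. For $d\le\rnd{I}_b(2)$, telescoping adjacent-item gaps inside $\rnd{I}_b$ gives $\bar\Delta_d\le\len(b)\Delta_{\max}\le K\Delta_{\max}$, so the split cap yields per-item contribution $O(K^2\log T/((1-\attrprob_{\max})\Delta_{\max}))$, and summing over at most $L$ such items gives $O(K^2 L\log T/((1-\attrprob_{\max})\Delta_{\max}))$. For $d>\rnd{I}_b(2)$ --- which forces $\rnd{I}_b(2)=K$ --- set $\Delta=\attrprob(K)-\attrprob(d)$. If $\Delta\le K\Delta_{\max}$ then $\bar\Delta_d\le 2K\Delta_{\max}$, and the split cap again gives per-item $O(K^2\log T/((1-\attrprob_{\max})\Delta_{\max}))$. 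If $\Delta>K\Delta_{\max}$, then $\bar\Delta_d\le 2\Delta$, and \cref{lem:item elimination} yields $n_{\ell_d}=O(K\log T/(\examprob^\ast(k)(1-\attrprob_{\max})\Delta^2))$, so the per-item contribution $O(K\log T/((1-\attrprob_{\max})\Delta))$ is at most $O(\log T/((1-\attrprob_{\max})\Delta_{\max}))$, strictly smaller than the first-case bound.

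The main obstacle I expect is the position-wise decomposition of the instantaneous regret across different active batches: the examination of position $k'$ depends on the items placed at higher-ranked positions, which may lie in batches other than $b$. Combining \cref{ass:order-independent examination}, \cref{ass:optimal examination}, and the correct-examination-scaling argument of \cref{sec:correct examination scaling} is what justifies replacing $\examprob(\cR_t,k')$ by the smaller $\examprob^\ast(k)$ uniformly in the per-pull bound, so that each pull inside batch $b$ can be charged a clean regret contribution $\examprob^\ast(k)\bar\Delta_d$ without double-counting across batches. Once that decomposition is secured, the rest is stage-wise summation and accounting for constants.
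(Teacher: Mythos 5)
Your proposal follows essentially the same route as the paper's proof: a per-item charge of $\examprob^\ast(\rnd{I}_b(1))\,\bar\Delta_d$ per display (justified via Assumptions \ref{ass:order-independent examination}, \ref{ass:decreasing examination} and \ref{ass:optimal examination} exactly as you describe), a case split on whether the item's gap is $O(K\Delta_{\max})$ (bounded via \cref{lem:split}) or larger (bounded via \cref{lem:item elimination}), and a geometric summation over the quadrupling stages. The only bookkeeping you elide in recovering the explicit constant is that an item can be \emph{displayed} up to $2 n_\ell$ times in a stage while being \emph{counted} only $n_\ell$ times (the padding with already-observed items in ${\tt DisplayBatch}$), which contributes the extra factor of $2$ the paper uses to reach $96 = 1.5 \cdot 64$.
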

\begin{proof}
Let $k = \rnd{I}_b(1)$ be the highest position in batch $b$. Choose any item $d \in \rnd{B}_{b, 0}$ and let $\Delta = \attrprob(k) - \attrprob(d)$.

First, we show that the expected per-step regret of any item $d$ is bounded by $\examprob^\ast(k) \Delta$ when event $\cE$ happens. Since event $\cE$ happens, all eliminations and splits up to any stage $\ell$ of batch $b$ are correct. Therefore, items $1, \dots, k - 1$ are at positions $1, \dots, k - 1$; and position $k$ is examined with probability $\examprob^\ast(k)$. Note that this is the highest examination probability in batch $b$ (\cref{ass:decreasing examination}). Our upper bound follows from the fact that the reward is linear in individual items (\cref{sec:stochastic click bandit}).

We analyze two cases. First, suppose that $\Delta \leq 2 K \Delta_{\max}$ for $\Delta_{\max}$ in \cref{lem:split}. Then by \cref{lem:split}, batch $b$ splits when the number of steps in a stage is at most
\begin{align*}
  \frac{16 K}{\examprob^\ast(k) (1 - \attrprob_{\max}) \Delta_{\max}^2} \log T\,.
\end{align*}
By the design of ${\tt DisplayBatch}$, any item in stage $\ell$ of batch $b$ is displayed at most $2 n_\ell$ times. Therefore, the maximum regret due to item $d$ in the last stage before the split is
\begin{align*}
  \frac{32 K \examprob^\ast(k) \Delta}{\examprob^\ast(k) (1 - \attrprob_{\max}) \Delta_{\max}^2} \log T \leq
  \frac{64 K^2 \Delta_{\max}}{(1 - \attrprob_{\max}) \Delta_{\max}^2} \log T =
  \frac{64 K^2}{(1 - \attrprob_{\max}) \Delta_{\max}} \log T\,.
\end{align*}
Now suppose that $\Delta > 2 K \Delta_{\max}$. This implies that item $d$ is easy to distinguish from item $K$. In particular,
\begin{align*}
  \attrprob(K) - \attrprob(d) =
  \Delta - (\attrprob(k) - \attrprob(K)) \geq
  \Delta - K \Delta_{\max} \geq
  \frac{\Delta}{2}\,,
\end{align*}
where the equality is from the identity
\begin{align*}
  \Delta =
  \attrprob(k) - \attrprob(d) =
  \attrprob(k) - \attrprob(K) + \attrprob(K) - \attrprob(d)\,;
\end{align*}
the first inequality is from $\attrprob(k) - \attrprob(K) \leq K \Delta_{\max}$, which follows from the definition of $\Delta_{\max}$ and $k \in [K]$; and the last inequality is from our assumption that $K \Delta_{\max} < \Delta / 2$. Now we apply the derived inequality and, by \cref{lem:item elimination} and from the design of ${\tt DisplayBatch}$, the maximum regret due to item $d$ in the stage where that item is eliminated is
\begin{align*}
  \frac{32 K \examprob^\ast(k) \Delta}{\examprob^\ast(k) (1 - \attrprob_{\max}) (\attrprob(K) - \attrprob(d))^2} \log T \leq
  \frac{128 K}{(1 - \attrprob_{\max}) \Delta} \log T \leq
  \frac{64}{(1 - \attrprob_{\max}) \Delta_{\max}} \log T\,.
\end{align*}
The last inequality is from our assumption that $\Delta > 2 K \Delta_{\max}$.

Because the lengths of the stages quadruple and $\mergerank$ resets all click estimators at the beginning of each stage, the maximum expected regret due to any item $d$ in batch $b$ is at most $1.5$ times higher than that in the last stage, and hence
\begin{align*}
  \E{\sum_{\ell = 0}^{T - 1} \rnd{R}_{b, \ell}} \leq
  \frac{96 K^2 \abs{\rnd{B}_{b, 0}}}{(1 - \attrprob_{\max}) \Delta_{\max}} \log T\,.
\end{align*}
This concludes our proof.
\end{proof}

\clearpage

\section{Technical Lemmas}
\label{sec:technical lemmas}

\begin{lemma}
\label{lem:KL Hoeffding} Let $(\rnd{X}_1)_{i = 1}^n$ be $n$ i.i.d. Bernoulli random variables, $\bar{\rnd{\mu}} = \sum_{i = 1}^n \rnd{X}_i$, and $\mu = \E{\bar{\rnd{\mu}}}$. Then
\begin{align*}
  P(\bar{\rnd{\mu}} \geq \mu + \varepsilon) \leq
  \exp[- n \kl{\mu + \varepsilon}{\mu}]
\end{align*}
for any $\varepsilon \in [0, 1 - \mu]$, and 
\begin{align*}
  P(\bar{\rnd{\mu}} \leq \mu - \varepsilon) \leq
  \exp[- n \kl{\mu - \varepsilon}{\mu}]
\end{align*}
for any $\varepsilon \in [0, \mu]$.
\end{lemma}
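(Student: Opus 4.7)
The plan is to prove this by the standard Chernoff method, which will yield the Bernoulli KL divergence as the Legendre dual of the log moment generating function. Note that from the expression $\mu = \E{\bar{\rnd\mu}}$ together with the shape of the advertised bound, the quantity $\bar{\rnd\mu}$ is the empirical \emph{mean} $\frac{1}{n}\sum_{i=1}^n \rnd X_i$ (the summation in the statement appears to be a typo); I will work with this reading.

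For the upper-tail bound, fix $\varepsilon \in [0,1-\mu]$ and $\lambda \geq 0$. I would apply Markov's inequality to the non-negative random variable $\exp(\lambda n \bar{\rnd\mu})$, giving
\begin{align*}
  P(\bar{\rnd\mu} \geq \mu+\varepsilon)
  \leq e^{-\lambda n(\mu+\varepsilon)} \, \E{e^{\lambda n \bar{\rnd\mu}}}
  = e^{-\lambda n(\mu+\varepsilon)} \, \bigl(\mu e^\lambda + 1-\mu\bigr)^n,
\end{align*}
where the last equality uses i.i.d.\ Bernoulli$(\mu)$ structure to factor the MGF. Rewriting the right-hand side as $\exp[-n\,\psi(\lambda)]$ with $\psi(\lambda) = \lambda(\mu+\varepsilon) - \log(\mu e^\lambda + 1-\mu)$, I would then optimize over $\lambda \geq 0$. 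Setting $\psi'(\lambda)=0$ gives $\mu e^\lambda/(\mu e^\lambda + 1-\mu) = \mu+\varepsilon$, i.e.\ $e^\lambda = \frac{(\mu+\varepsilon)(1-\mu)}{\mu(1-\mu-\varepsilon)}$, which is non-negative on the allowed range. Substituting this optimizer back into $\psi$ and simplifying yields exactly the Bernoulli KL divergence
\begin{align*}
  \sup_{\lambda\geq 0}\psi(\lambda)
  = (\mu+\varepsilon)\log\tfrac{\mu+\varepsilon}{\mu} + (1-\mu-\varepsilon)\log\tfrac{1-\mu-\varepsilon}{1-\mu}
  = \kl{\mu+\varepsilon}{\mu},
\end{align*}
which establishes the upper-tail inequality.

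The lower-tail bound is entirely symmetric: apply Markov to $\exp(-\lambda n \bar{\rnd\mu})$ for $\lambda \geq 0$, factor the MGF, and optimize; the optimizer is again interior to the admissible range for $\varepsilon \in [0,\mu]$, and the Legendre transform of the log MGF at the point $\mu-\varepsilon$ evaluates to $\kl{\mu-\varepsilon}{\mu}$. The only minor bookkeeping is to handle the boundary cases $\varepsilon=0$ (trivial, both sides equal $1$) and $\varepsilon=1-\mu$ or $\varepsilon=\mu$ (where the KL divergence may be infinite but the probability bound still holds by convention).

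There is no genuine obstacle here; the result is the classical Cram\'er--Chernoff bound specialised to Bernoulli variables, and the main (routine) calculation is just the Legendre-transform computation that identifies $\sup_{\lambda}\psi(\lambda)$ with $\kl{\mu+\varepsilon}{\mu}$. If I wanted a one-line proof I would instead invoke Sanov's theorem (or the standard method-of-types inequality) directly for two-symbol distributions, but the Chernoff derivation above is self-contained and matches the elementary flavour of the appendix.
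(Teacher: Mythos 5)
Your proof is correct and follows essentially the same route as the paper: the paper simply cites inequality (2.1) of Hoeffding (1963) for the bound $\bigl[(\tfrac{\mu}{\mu+\varepsilon})^{\mu+\varepsilon}(\tfrac{1-\mu}{1-\mu-\varepsilon})^{1-\mu-\varepsilon}\bigr]^n$ and then rewrites it as $\exp[-n\,\kl{\mu+\varepsilon}{\mu}]$, whereas you derive that same intermediate bound from scratch via the Chernoff/Legendre-transform computation. You are also right that $\bar{\rnd{\mu}}$ must be read as the empirical mean $\frac{1}{n}\sum_i \rnd{X}_i$ (a typo in the statement), which is the reading the paper's proof implicitly adopts.
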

\begin{proof}
We only prove the first claim. The other claim follows from symmetry.

From inequality (2.1) of \citet{hoeffding63probability}, we have that
\begin{align*}
  P(\bar{\rnd{\mu}} \geq \mu + \varepsilon) \leq
  \left[\left(\frac{\mu}{\mu + \varepsilon}\right)^{\mu + \varepsilon}
  \left(\frac{1 - \mu}{1 - (\mu + \varepsilon)}\right)^{1 - (\mu + \varepsilon)}\right]^n
\end{align*}
for any $\varepsilon \in [0, 1 - \mu]$. Now note that
\begin{align*}
  \left[\left(\frac{\mu}{\mu + \varepsilon}\right)^{\mu + \varepsilon}
  \left(\frac{1 - \mu}{1 - (\mu + \varepsilon)}\right)^{1 - (\mu + \varepsilon)}\right]^n
  & = \exp\left[n \left[(\mu + \varepsilon) \log \frac{\mu}{\mu + \varepsilon} +
  (1 - (\mu + \varepsilon)) \log \frac{1 - \mu}{1 - (\mu + \varepsilon)}\right]\right] \\
  & = \exp\left[- n \left[(\mu + \varepsilon) \log \frac{\mu + \varepsilon}{\mu} +
  (1 - (\mu + \varepsilon)) \log \frac{1 - (\mu + \varepsilon)}{1 - \mu}\right]\right] \\
  & = \exp[- n \kl{\mu + \varepsilon}{\mu}]\,.
\end{align*}
This concludes the proof.
\end{proof}

\begin{lemma}
\label{lem:KL scaling} For any $c, p, q \in [0, 1]$,
\begin{align}
  c (1 - \max \set{p, q}) \kl{p}{q} \leq
  \kl{c p}{c q} \leq
  c \kl{p}{q}\,.
  \label{eq:KL scaling}
\end{align}
\end{lemma}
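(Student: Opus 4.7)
The plan is to handle the two inequalities in \eqref{eq:KL scaling} separately. The upper bound is a direct log-sum inequality. The lower bound admits a clean proof via an integral representation of the KL divergence, after which the right constant $1-m$ emerges from an elementary algebraic identity.

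For the upper bound, I would decompose $1 - cp = c(1-p) + (1-c)$ and $1 - cq = c(1-q) + (1-c)$ and apply the log-sum inequality to the pairs $(c(1-p),\,1-c)$ and $(c(1-q),\,1-c)$ to obtain
\begin{align*}
(1-cp) \log \frac{1-cp}{1-cq} \leq c(1-p) \log \frac{1-p}{1-q}.
\end{align*}
Adding the identity $cp \log \frac{cp}{cq} = cp \log \frac{p}{q}$ to both sides gives $\kl{cp}{cq} \leq c \kl{p}{q}$.

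For the lower bound, I would use the fact that $\partial_p^2 \kl{s}{q} = 1/[s(1-s)]$ and that $\partial_p \kl{p}{q}$ vanishes at $p = q$. Taylor's theorem with integral remainder then yields
\begin{align*}
\kl{p}{q} = \int_q^p \frac{p-s}{s(1-s)} \, ds,
\end{align*}
valid for either sign of $p-q$ with the usual signed-integral convention, and the substitution $s = cr$ in the analogous expression for $\kl{cp}{cq}$ produces
\begin{align*}
\kl{cp}{cq} = c \int_q^p \frac{p-r}{r(1-cr)} \, dr.
\end{align*}
Taking the difference $\kl{cp}{cq} - c(1-m) \kl{p}{q}$ and combining fractions, it suffices to show that in the case $p \geq q$ (so $m = p$ and $r \in [q,p]$), the numerator obeys
\begin{align*}
(1-r) - (1-p)(1-cr) = (p-r) + cr(1-p) \geq 0,
\end{align*}
which is manifest since both summands are nonneg on $[q,p]$. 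The case $q > p$ is perfectly symmetric: $m = q$, $r$ ranges over $[p,q]$, and the same manipulation yields $(q-r) + cr(1-q) \geq 0$.

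The main obstacle is guessing the particular algebraic identity that isolates the factor $1-m$; the integral representation itself is standard, and everything downstream is routine verification that a polynomial on an interval has the expected sign. If I had tried instead to operate purely on the closed-form expression $\kl{cp}{cq} = cp \log(p/q) + (1-cp)\log[(1-cp)/(1-cq)]$, the absence of an obvious ``comparison point'' makes it much harder to see why $1-m$ rather than some other function of $p,q$ is the right constant.
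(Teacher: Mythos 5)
Your proof is correct. The lower bound is essentially the paper's argument in integral form: the paper compares the derivatives of $\kl{c p}{c x}$ and $c \kl{p}{x}$ in the second argument and bounds their ratio via $\frac{1 - c x}{1 - x} \leq \frac{1}{1 - m}$ with $m = \max \set{p, q}$; your representation $\kl{p}{q} = \int_q^p \frac{p - s}{s (1 - s)} \ud s$ is exactly the integral of that derivative, and your numerator inequality $(1 - r) - (1 - m)(1 - c r) \geq 0$ is the same ratio bound rearranged into a polynomial sign check, so the two differ only in presentation. The upper bound, however, takes a genuinely different route: the paper again integrates a derivative comparison (since $1 - c x \geq 1 - x$, one has $\abs{\partial_x \kl{c p}{c x}} \leq \abs{\partial_x [c \kl{p}{x}]}$ with matching signs), whereas you apply the log-sum inequality to the decomposition $1 - c p = c(1 - p) + (1 - c)$, $1 - c q = c(1 - q) + (1 - c)$. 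That is the data-processing inequality for the channel thinning a Bernoulli$(p)$ by an independent Bernoulli$(c)$, it needs no calculus, and it makes transparent why no extra factor is lost in the upper direction. One small point of hygiene, shared with the paper's own proof: the boundary cases $p, q \in \set{0, 1}$ and $c = 0$ (where a side may be infinite or an integrand singular) should be dispatched by continuity or by inspection, but all are trivially consistent with \eqref{eq:KL scaling}.
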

\begin{proof}
The proof is based on differentiation. The first two derivatives of $\kl{c p}{c q}$ with respect to $q$ are
\begin{align*}
  \frac{\partial}{\partial q} \kl{c p}{c q} = \frac{c (q - p)}{q (1 - c q)}\,, \quad
  \frac{\partial^2}{\partial q^2} \kl{c p}{c q} = \frac{c^2 (q - p)^2 + c p (1 - c p)}{q^2 (1 - c q)^2}\,;
\end{align*}
and the first two derivatives of $c \kl{p}{q}$ with respect to $q$ are
\begin{align*}
  \frac{\partial}{\partial q} [c \kl{p}{q}] = \frac{c (q - p)}{q (1 - q)}\,, \quad
  \frac{\partial^2}{\partial q^2} [c \kl{p}{q}] = \frac{c (q - p)^2 + c p (1 - p)}{q^2 (1 - q)^2}\,.
\end{align*}
The second derivatives show that $\kl{c p}{c q}$ and $c \kl{p}{q}$ are convex in $q$ for any $p$. Their minima are at $q = p$.

Now we fix $p$ and $c$, and prove \eqref{eq:KL scaling} for any $q$. The upper bound is derived as follows. Since
\begin{align*}
  \kl{c p}{c x} =
  c \kl{p}{x} =
  0
\end{align*}
when $x = p$, the upper bound holds when $c \kl{p}{x}$ increases faster than $\kl{c p}{c x}$ for any $p < x \leq q$, and when $c \kl{p}{x}$ decreases faster than $\kl{c p}{c x}$ for any $q \leq x < p$. This follows from the definitions of $\frac{\partial}{\partial x} \kl{c p}{c x}$ and $\frac{\partial}{\partial x} [c \kl{p}{x}]$. In particular, both derivatives have the same sign and $\abs{\frac{\partial}{\partial x} \kl{c p}{c x}} \leq \abs{\frac{\partial}{\partial x} [c \kl{p}{x}]}$ for any feasible $x \in [\min \set{p, q}, \max \set{p, q}]$.

The lower bound is derived as follows. The ratio of $\frac{\partial}{\partial x} [c \kl{p}{x}]$ and $\frac{\partial}{\partial x} \kl{c p}{c x}$ is bounded from above as
\begin{align*}
  \frac{\frac{\partial}{\partial x} [c \kl{p}{x}]}{\frac{\partial}{\partial x} \kl{c p}{c x}} =
  \frac{1 - c x}{1 - x} \leq
  \frac{1}{1 - x} \leq
  \frac{1}{1 - \max \set{p, q}}
\end{align*}
for any $x \in [\min \set{p, q}, \max \set{p, q}]$. Therefore, we get a lower bound on $\kl{c p}{c x}$ when we multiply $c \kl{p}{x}$ by $1 - \max \set{p, q}$.
\end{proof}

\end{document}